\documentclass{article}

\newif\ifarxiv
\ifdefined\arxivmode
  \csname arxiv\arxivmode\endcsname
\else
  \arxivtrue
\fi

\usepackage{iclr2027_conference,times}

\ifarxiv
  \iclrfinalcopy
\fi

\usepackage{amsmath,amsfonts,bm}

\def\eqref#1{equation~\ref{#1}}

\def\1{\bm{1}}

\DeclareMathAlphabet{\mathsfit}{\encodingdefault}{\sfdefault}{m}{sl}
\SetMathAlphabet{\mathsfit}{bold}{\encodingdefault}{\sfdefault}{bx}{n}

\usepackage{hyperref}
\usepackage{url}
\usepackage{booktabs}
\usepackage{graphicx}
\usepackage{xcolor}
\usepackage{amsmath,amssymb,amsfonts,amsthm}
\usepackage{multirow}
\usepackage{xspace}
\usepackage{subcaption}
\usepackage{wrapfig}
\usepackage{placeins}

\ifarxiv
  \usepackage{authblk}

\fi

\newtheorem{theorem}{Theorem}

\renewcommand{\eqref}[1]{(\ref{#1})}

\graphicspath{{images/}}

\newcommand{\method}{MeRoPE\xspace}

\title{\method: Metric Rotary Position Embedding \\ for Camera-Controlled Video Generation}

\ifarxiv
  \author[1]{Zhijian Qiao}
  \author[2]{Xinjiang Wang}
  \author[2]{Jiajie Chen}
  \author[2]{Haoming Huang}
  \author[2]{Meng Li}
  \author[2]{\textbf{Chih-Chung Chou}}
  \author[2]{\textbf{Jing Wang}}
  \author[1,$\dagger$]{\textbf{Shaojie Shen}}
  \affil[1]{The Hong Kong University of Science and Technology}
  \affil[2]{Zhuoyu Technology}
  \affil[$\dagger$]{Corresponding author: \texttt{eeshaojie@ust.hk}}
\else
  \author{Anonymous Authors}
\fi

\begin{document}

\maketitle

\ifarxiv
  \begin{center}
    \vspace{-0.22in}
    \small Project page: \href{https://qiaozhijian.github.io/merope/}{\texttt{qiaozhijian.github.io/merope}}
  \end{center}
  \thispagestyle{fancy}
  \pagestyle{fancy}
  \lhead{Preprint.}
\fi

\begin{abstract}
In camera-controlled video generation, geometry-aware positional encodings condition tokens on camera extrinsics and per-token viewing rays.  Existing schemes, however, have a scale-dependent failure mode on real-world metric camera trajectories: homogeneous projective encodings cause attention logits and feature norms to grow unbounded with physical translation baselines.  We propose \method (\textbf{Me}tric \textbf{Ro}tary \textbf{P}osition \textbf{E}mbedding), a norm-preserving relative camera encoding for attention.  \method encodes relative orientations between calibrated viewing rays with orthogonal rotation blocks, maps raw metric displacements into multi-frequency rotary phases, and adds a disparity-anchored correspondence prior along the epipolar arc.  This design strictly preserves feature norms, bounds pre-softmax attention logits regardless of the physical translation scale, and maintains exact invariance to global rigid coordinate changes.  Across nuScenes and PanShot, which cover large-baseline trajectories and diverse camera optics, respectively, \method achieves stronger camera control than prior encodings, with the best consistency between generated camera motion and conditioning poses in both rotation and translation.  Code will be made publicly available.
\end{abstract}

\section{Introduction}
\label{sec:intro}

Geometry-aware positional encodings (PEs) enable camera-controlled video generation by making attention sensitive to camera poses and token-level viewing geometry.  However, representing real-world \emph{metric} translation within these encodings remains challenging: raw physical displacements can destabilize attention, whereas normalization discards physical scale.

Existing relative camera encodings based on homogeneous projective matrices~\citep{miyato2024gta,li2025prope,zhang2026ucpe} jointly encode rotation and translation in non-orthogonal attention operators.  In these formulations, physical translation enters as an unnormalized inner product, causing attention logits and feature norms to grow unbounded with the baseline length.  Over large displacements, translational terms artificially dominate attention scores and suppress visual content, degrading trajectory control.

This tension yields three task-driven desiderata for camera positional encodings: (A)~depth-independent full metric relative-pose dependence on $T_i^{-1}T_j$, which retains both motion direction and physical translation scale without coupling translation to an assumed scene depth; (B)~strict per-token factorization, which permits independent query/key transformations compatible with standard attention; and (C)~norm-preserving actions, which prevent baseline-dependent amplification of logits and features.  However, Theorem~\ref{thm:translation-blindness} shows that no continuous finite-dimensional encoding can satisfy all three while remaining sensitive to metric translation.  Because camera control requires both metric fidelity (A) and scale stability (C), we prioritize (A+C) and relax (B) through query-camera grouping.

We therefore introduce \method (\textbf{Me}tric \textbf{Ro}tary \textbf{P}osition \textbf{E}mbedding), a norm-preserving encoding with full metric relative-pose dependence.  \method constructs a minimum-rotation (MinRot) local frame for each calibrated 3D ray, represents relative ray orientation with orthogonal blocks, and maps query-frame metric displacements into multi-frequency rotary phases.  Together, these operators retain raw metric camera motion while bounding attention logits across arbitrary physical baselines.

Because this pose operator is scene-independent, it does not identify potential static-scene correspondences across views.  \method therefore encodes correspondence hypotheses as disparity-anchored rotations along spherical epipolar arcs.  Unlike CamCo~\citep{xu2024camco}, this design uses epipolar geometry to express relative cross-view relations without restricting which token pairs can interact.  Its angular disparity anchors span a bounded spherical arc, avoiding the dataset-specific metric-depth intervals of URoPE~\citep{xie2026urope} and the learned depth estimation of RayRoPE~\citep{wu2026rayrope}.

\paragraph{Contributions.}
\begin{itemize}
    \item We identify baseline-dependent logit and feature-norm growth in homogeneous projective camera encodings and formalize a trilemma among full metric relative-pose dependence, strict per-token factorization, and norm preservation.
    \item We propose \method, which resolves this trade-off through query-camera grouping and combines a norm-preserving metric-pose operator with disparity-anchored spherical rotations for cross-view relations.
    \item We demonstrate improved camera control on large-baseline nuScenes and camera-diverse PanShot, with \method achieving the best joint rotation--translation consistency on both benchmarks.
\end{itemize}

\section{Related Work}
\label{sec:related}

\paragraph{Camera conditioning pathways in video generation.}
Camera trajectories control viewpoint in video synthesis for applications such as autonomous driving and robotic simulation.  Existing approaches implement this control through several broad pathways.  \emph{Feature fusion and adapters} integrate compact pose or ray representations into generative backbones: MotionCtrl~\citep{wang2024motionctrl} concatenates camera extrinsics with temporal features, CameraCtrl~\citep{he2025cameractrl} adds multi-scale features encoded from per-pixel Pl\"ucker rays, ReCamMaster~\citep{bai2025recammaster} additively embeds target extrinsics into transformer features, and CamCo~\citep{xu2024camco} combines Pl\"ucker-conditioned feature fusion with epipolar attention.  These mechanisms provide lightweight control but do not by themselves encode relative viewpoint relations in attention.  \emph{Explicit 3D guidance and rendering} instead derives structural conditions from estimated geometry: Gen3C~\citep{ren2025gen3c} renders point-cloud caches, Lyra~2.0~\citep{shen2026lyra2} combines depth-based warping with geometry-aware memory, and ViewCrafter~\citep{yu2024viewcrafter} and TrajectoryCrafter~\citep{yu2025trajectorycrafter} condition generation on point-cloud renderings.  They provide direct spatial guidance, but errors in estimated depth, pose, or scene motion can produce holes, distortions, or misaligned conditions.  \emph{Adaptive modulation} injects camera-dependent affine transformations rather than directly fusing conditions: BulletTime~\citep{wang2025bullettime} combines Camera-AdaLN with camera-aware positional encoding, while CompoSIA~\citep{zhan2026composing} couples AdaLN-style residual ego-motion modulation with global projective attention.  \emph{Geometry-aware positional encodings} incorporate camera relations directly into attention without constructing an external 3D proxy at inference time.

\paragraph{Evolution of geometry-aware positional encodings.}
Within this family, early formulations operated on camera-level extrinsic transformations~\citep{kong2024eschernet,miyato2024gta} and projective viewing frustums~\citep{li2025prope}.  Specifically, CaPE~\citep{kong2024eschernet} applies block-diagonal relative pose matrices to queries and keys in multi-view attention, GTA~\citep{miyato2024gta} generalizes this to group representations of $SE(3)\times SO(2)\times SO(2)$ acting on queries, keys, and values, and PRoPE~\citep{li2025prope} incorporates intrinsics to model projective viewing frustums.  Subsequent methods construct per-token 3D ray representations: UCPE~\citep{zhang2026ucpe} builds token-level ray-local rigid frames from calibrated pixel-to-ray mappings to support diverse camera optics, and RayNova~\citep{xie2026raynova} applies multi-axis RoPE to Pl\"ucker coordinates in a fixed world frame.  Recent efforts further incorporate depth cues: URoPE~\citep{xie2026urope} lifts tokens to discrete depth anchors and reprojects them onto the query pixel plane, RayRoPE~\citep{wu2026rayrope} computes expected rotary phases from predicted depth segments, and CRePE~\citep{jin2026crepe} models curved rays with pseudo-depth expectation distributions.

\begin{wraptable}{r}{0.48\textwidth}
    \vspace{-3mm}
    \centering
    \captionsetup{justification=raggedright,singlelinecheck=false}
    \caption{Taxonomy under three properties: (A)~depth-independent full metric relative pose, (B)~strict per-token factorization, and (C)~norm-preserving attention action.}
    \label{tab:related-taxonomy}
    \small
    \setlength{\tabcolsep}{3pt}
    \begin{tabular}{lccc}
    \toprule
    Method & (A) & (B) & (C) \\
    \midrule
    CaPE~\citep{kong2024eschernet} & \textbf{Yes} & \textbf{Yes} & No \\
    GTA~\citep{miyato2024gta} & \textbf{Yes} & \textbf{Yes} & No \\
    PRoPE~\citep{li2025prope} & \textbf{Yes} & \textbf{Yes} & No \\
    UCPE~\citep{zhang2026ucpe} & \textbf{Yes} & \textbf{Yes} & No \\
    \midrule
    RayNova~\citep{xie2026raynova} & No & \textbf{Yes} & \textbf{Yes} \\
    ViewRope~\citep{xiang2026viewrope} & No & \textbf{Yes} & \textbf{Yes} \\
    \midrule
    URoPE~\citep{xie2026urope} & No & No & \textbf{Yes} \\
    RayRoPE~\citep{wu2026rayrope} & No & No & No \\
    CRePE~\citep{jin2026crepe} & No & No & No \\
    \midrule
    \textbf{\method} (Ours) & \textbf{Yes} & No & \textbf{Yes} \\
    \bottomrule
    \end{tabular}
    \vspace{-2mm}
\end{wraptable}

\paragraph{Taxonomy under the trilemma and MeRoPE.}
Table~\ref{tab:related-taxonomy} organizes camera positional encodings by three properties: (A)~depth-independent full metric relative-pose dependence on $T_i^{-1}T_j$, (B)~strict per-token factorization, and (C)~norm-preserving attention actions.
Homogeneous and projective encodings (CaPE, GTA, PRoPE, UCPE) preserve (A+B) but violate (C), as non-orthogonal translation blocks cause attention logits to scale unbounded with physical baselines.  World-frame ray models such as RayNova and direction-only encodings such as ViewRope~\citep{xiang2026viewrope} preserve (B+C) but sacrifice (A): RayNova depends on the chosen global coordinate frame, whereas ViewRope omits metric translation.  Query-frame reprojection approaches~\citep{xie2026urope,wu2026rayrope,jin2026crepe} relax (B) and do not satisfy (A), because their attention relations are defined in pixel coordinates.  Under central projection, jointly scaling camera translation and scene depth leaves projected coordinates unchanged, so pixel-space relations identify translation only up to a depth scale---the same projective ambiguity underlying monocular scale ambiguity.  Fixed or predicted depths select a scale but do not make the pixel-space relation intrinsically metric.  URoPE preserves (C), while the expected phasors used by RayRoPE and CRePE are not strictly norm-preserving.
\method retains (A+C) by relaxing (B) via query-camera grouping; Section~\ref{sec:method} verifies (A+C), and Theorem~\ref{thm:translation-blindness} establishes why (B) must be relaxed.

\section{Method}
\label{sec:method}

This section presents \method (\textbf{Me}tric \textbf{Ro}tary \textbf{P}osition \textbf{E}mbedding), a norm-preserving relative camera positional encoding for video generation.
Section~\ref{sec:setup} formalizes geometry-aware attention and analyzes the scale-dependent instability of homogeneous relative-pose operators.
Section~\ref{sec:pose-encoding} encodes ray-frame rotation and raw metric translation with orthogonal blocks, while Section~\ref{sec:sray} introduces disparity-anchored spherical rotations for potential static-scene correspondences.

\subsection{Geometry-Aware Attention and Metric-Scale Limitation}
\label{sec:setup}

We formalize how camera positional encodings act within a geometry-aware attention layer.  Let $a=(i,p)$ denote patch $p$ from query camera $i$, and let $b=(j,q)$ denote patch $q$ from key camera $j$, with corresponding query, key, and value vectors $\mathbf{q}_a, \mathbf{k}_b, \mathbf{v}_b \in \mathbb{R}^d$.
Geometry-aware attention modulates query-key comparison and value aggregation through a pairwise transformation operator $\mathcal{U}_{ab} \in \mathbb{R}^{d \times d}$:
\begin{equation}
s_{ab} = \frac{\mathbf{q}_a^\top\mathcal{U}_{ab}\mathbf{k}_b}{\sqrt d}, \qquad
\alpha_{ab} = \frac{\exp(s_{ab})}{\sum_{b'}\exp(s_{ab'})}, \qquad
\mathbf{y}_a = \sum_b \alpha_{ab}\mathcal{U}_{ab}\mathbf{v}_b.
\label{eq:geometry-conditioned-attention}
\end{equation}
In GTA~\citep{miyato2024gta}, the camera component of $\mathcal{U}_{ab}$ is a direct sum $\bigoplus \mathcal{U}^{\mathrm{cam}}_{ij}$ of $4 \times 4$ homogeneous relative-pose blocks:
\begin{equation}
R_{ij} := R_i^\top R_j, \qquad
\Delta\mathbf{o}_{j\mid i} := R_i^\top(\mathbf{o}_j - \mathbf{o}_i), \qquad
\mathcal{U}^{\mathrm{cam}}_{ij} :=
\begin{bmatrix}
R_{ij} & \Delta\mathbf{o}_{j\mid i} \\
\mathbf{0}^\top & 1
\end{bmatrix},
\label{eq:gta-camera-block}
\end{equation}
where $R_i \in \mathrm{SO}(3)$ and $\mathbf{o}_i \in \mathbb{R}^3$ denote the rotation and optical center of camera $i$.
Decomposing the 4D feature segments of $\mathbf{q}_a$ and $\mathbf{k}_b$ into 3D vectors and scalar coordinates, $[\mathbf{u}^{q}; h_q]$ and $[\mathbf{u}^{k}; h_k]$, the resulting attention score contribution is:
\begin{equation}
\begin{bmatrix}\mathbf{u}^{q}\\h_q\end{bmatrix}^{\!\top}
\mathcal{U}^{\mathrm{cam}}_{ij}
\begin{bmatrix}\mathbf{u}^{k}\\h_k\end{bmatrix}
=
(\mathbf{u}^{q})^\top R_{ij}\mathbf{u}^{k}
+ h_k(\mathbf{u}^{q})^\top\Delta\mathbf{o}_{j\mid i}
+ h_qh_k .
\label{eq:homogeneous-score}
\end{equation}

Because the second term $h_k(\mathbf{u}^{q})^\top\Delta\mathbf{o}_{j\mid i}$ is an unnormalized linear inner product, it scales linearly with the physical translation baseline $\|\mathbf{o}_j - \mathbf{o}_i\|_2$.
In large-baseline settings, this translational term numerically dominates the attention logits, suppressing both rotational geometry and semantic visual content.
Similarly, because $\mathcal{U}^{\mathrm{cam}}_{ij}$ is non-orthogonal, the aligned value features also scale unbounded with the baseline:
\begin{equation}
\left\|
\mathcal{U}^{\mathrm{cam}}_{ij}
\begin{bmatrix}\mathbf{u}^{v}\\h_v\end{bmatrix}
\right\|_2^2
=
\left\|R_{ij}\mathbf{u}^{v} + h_v\Delta\mathbf{o}_{j\mid i}\right\|_2^2 + h_v^2 .
\label{eq:homogeneous-norm}
\end{equation}

\begin{wrapfigure}{r}{0.42\textwidth}
    \vspace{-4mm}
    \centering
    \includegraphics[width=\linewidth]{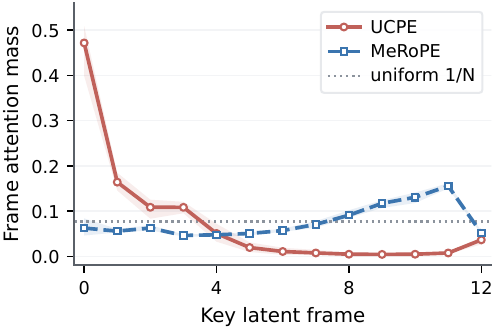}
    \vspace{-6mm}
    \caption{\textbf{Temporal attention under large baselines.}
    Cross-frame attention mass from the center query token of the final frame across all preceding key frames along a straight trajectory. As the baseline grows, UCPE (red) allocates more mass to distant early frames, whereas \method (blue) avoids this shift.}
    \label{fig:motivation-toy}
    \vspace{-11mm}
\end{wrapfigure}

Figure~\ref{fig:motivation-toy} empirically illustrates this limitation on a trained baseline model.
Evaluating temporal attention allocations along a forward-driving trajectory, we measure the cross-frame attention mass from the center query token of the final frame across all key frames.
UCPE assigns an increasing share of attention to distant early frames as the translation baseline grows, reducing the mass on adjacent frames despite their greater cross-view co-visibility.
\method bounds the positional contribution and avoids this baseline-driven shift.

\subsection{Orthogonal Metric Pose Encoding}
\label{sec:pose-encoding}

The intrinsic relative camera pose $(R_i^\top R_j, \Delta\mathbf{o}_{j\mid i})$ consists of relative rotation and metric displacement.
To prevent the norm and logit explosion of homogeneous matrices, we decompose relative pose into two decoupled orthogonal operators: a ray-local minimum-rotation (MinRot) block $\mathcal{U}^{\mathrm{rot}}_{ab}$ and a query-frame metric translation RoPE block $\mathcal{U}^{\mathrm{trans}}_{ij}$.

\paragraph{Ray-local coordinate framing via MinRot.}
Each image patch $p$ from camera $i$ is mapped to a calibrated unit viewing ray $\mathbf{d}_{i,p}^c \in \mathbb{S}^2$ in the camera coordinate frame, providing a camera-model-agnostic representation across heterogeneous optics.
To make attention sensitive to individual viewing directions rather than whole-camera poses alone, each token ray is assigned a local coordinate frame ($A_{i,p}$ and $A_{j,q}$ in Figure~\ref{fig:encoding-geometry}).
We construct this ray-local rotation matrix $A_{i,p} \in \mathrm{SO}(3)$ via the Rodrigues minimum-rotation formula, carrying the optical axis $\mathbf{e}_z$ directly onto $\mathbf{d}_{i,p}^c$:
\begin{equation}
A_{i,p} = I + [\mathbf{v}_{i,p}]_\times + \frac{[\mathbf{v}_{i,p}]_\times^2}{1 + c_{i,p}}, \quad \text{where } \mathbf{v}_{i,p} = \mathbf{e}_z \times \mathbf{d}_{i,p}^c, \; c_{i,p} = \mathbf{e}_z^\top \mathbf{d}_{i,p}^c.
\label{eq:minrot}
\end{equation}
Unlike UCPE~\citep{zhang2026ucpe}, whose cross-product frame degenerates when a token's vertical ray angle reaches $\pm 90^\circ$ (equivalently, as the vertical field of view approaches $180^\circ$), $A_{i,p}$ is singular only at the backward direction $\mathbf{d}_{i,p}^c = -\mathbf{e}_z$ ($-180^\circ$).
Transforming ray $p$ from camera $i$ into world coordinates yields the ray-local frame $R_{i,p}^{\mathrm{ray}} = R_i A_{i,p}$.
The pairwise relative rotation between query ray $(i,p)$ and key ray $(j,q)$ is then $A_{i,p}^\top R_i^\top R_j A_{j,q}$.
In attention, this relative rotation is embedded as a direct sum over $m$ repeated 3D rotation blocks:
\begin{equation}
\mathcal{U}^{\mathrm{rot}}_{ab} := \bigoplus_{k=1}^{m} \left( A_{i,p}^\top R_i^\top R_j A_{j,q} \right),
\label{eq:relative-ray-rotation}
\end{equation}
which strictly preserves feature norms while capturing the 3D relative orientation between rays.

\paragraph{Query-frame metric translation RoPE.}
To encode the query-frame metric displacement $\Delta\mathbf{o}_{j\mid i} = R_i^\top(\mathbf{o}_j - \mathbf{o}_i)$ illustrated in Figure~\ref{fig:encoding-geometry} without baseline-dependent logit growth, we map each translation coordinate into multi-frequency rotary phases.
Following RoPE frequency construction and position-interpolation principles~\citep{su2024roformer,chen2023extending}, given predefined minimum and maximum wavelengths $[\lambda_{\min}, \lambda_{\max}]$, we construct $K$ logarithmically spaced wavelengths and their corresponding angular frequencies:
\begin{equation}
\omega_k = \frac{2\pi}{\lambda_k}, \qquad \lambda_k = \lambda_{\min} \left(\frac{\lambda_{\max}}{\lambda_{\min}}\right)^{k/(K-1)}, \qquad k = 0, \dots, K-1.
\label{eq:translation-frequencies}
\end{equation}
For each Cartesian axis $c \in \{x,y,z\}$, the 1D displacement $[\Delta\mathbf{o}_{j\mid i}]_c$ rotates a 2D feature subspace by angle $\omega_k [\Delta\mathbf{o}_{j\mid i}]_c$.
The translation operator is then the direct sum over all 3 axes and $K$ frequencies:
\begin{equation}
\mathcal{U}^{\mathrm{trans}}_{ij} := \bigoplus_{c \in \{x,y,z\}} \bigoplus_{k=0}^{K-1} \operatorname{Rot}\big(\omega_k [\Delta\mathbf{o}_{j\mid i}]_c\big), \qquad \text{where } \operatorname{Rot}(\theta) = \begin{bmatrix} \cos\theta & -\sin\theta \\ \sin\theta & \cos\theta \end{bmatrix},
\label{eq:translation-rope}
\end{equation}
which is strictly orthogonal, preserving feature norms and preventing attention logits from scaling with the physical translation baseline.

\subsection{Disparity-Anchored Spherical Encoding}
\label{sec:sray}

\begin{wrapfigure}{r}{0.38\textwidth}
    \vspace{-15mm}
    \centering
    \includegraphics[width=\linewidth]{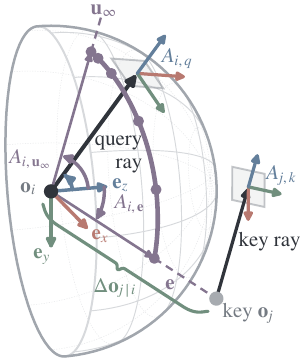}
    \vspace{-6mm}
    \caption{\textbf{Geometric construction.}
    Blue marks the query/key ray-local frames \textcolor[RGB]{99,127,155}{$A_{i,q}$} and \textcolor[RGB]{99,127,155}{$A_{j,k}$} used by rotation PE; green marks the query-frame metric displacement \textcolor[RGB]{113,143,126}{$\Delta\mathbf{o}_{j\mid i}$} used by translation PE; purple marks disparity PE, whose great-circle arc runs from the transformed key-ray direction \textcolor[RGB]{132,117,143}{$\mathbf{u}_\infty$} to the epipole \textcolor[RGB]{132,117,143}{$\mathbf{e}$}, with endpoint frames \textcolor[RGB]{132,117,143}{$A_{i,\mathbf{u}_\infty}$} and \textcolor[RGB]{132,117,143}{$A_{i,\mathbf{e}}$} and sampled anchor directions \textcolor[RGB]{132,117,143}{$\mathbf{u}_{i,\beta_\ell}$} in between.}
    \label{fig:encoding-geometry}
    \vspace{-11mm}
\end{wrapfigure}

Potential static-scene correspondences add geometric guidance beyond the scene-independent relative camera framing provided by the orthogonal metric pose operator.
Existing methods either restrict attention strictly to 2D pixel epipolar lines~\citep{xu2024camco} or project metric depth anchors back into the query image plane~\citep{xie2026urope}.
However, pixel-plane reprojections break the camera-model-agnostic ray interface, and unbounded metric depth ranges require heuristic, dataset-specific anchor choices.

We instead construct correspondence anchors in spherical ray space using bounded angular disparity fractions (Figure~\ref{fig:encoding-geometry}).
Let $\mathbf{u}_\infty = R_i^\top R_j \mathbf{d}_{j,q}^c$ be the key ray direction in the query camera frame; for a nonzero baseline, let $\mathbf{e} = \Delta\mathbf{o}_{j\mid i} / \|\Delta\mathbf{o}_{j\mid i}\|_2$ be the epipole unit vector.
As 3D scene depth varies from $\infty$ to $0$, points along key ray $(j,q)$ sweep a great-circle arc on the unit sphere connecting $\mathbf{u}_\infty$ and $\mathbf{e}$.
We sample $L$ discrete disparity anchors along this epipolar arc at angular fractions $\rho_\ell \in [0, 1]$:
\begin{equation}
\begin{gathered}
\mathbf{u}_{i, \beta_\ell} = \cos(\rho_\ell \beta_{\max})\,\mathbf{u}_\infty + \sin(\rho_\ell \beta_{\max})\,\hat{\mathbf{t}}, \\
\text{where } \hat{\mathbf{t}} = \frac{\mathbf{e} - (\mathbf{e}^\top \mathbf{u}_\infty)\mathbf{u}_\infty}{\|\mathbf{e} - (\mathbf{e}^\top \mathbf{u}_\infty)\mathbf{u}_\infty\|_2}, \quad \beta_{\max} = \arccos(\mathbf{e}^\top \mathbf{u}_\infty).
\end{gathered}
\label{eq:disparity-anchor}
\end{equation}
For each anchor $\ell$, the Rodrigues rotation $A_{i, \beta_\ell}$ carries the query-camera optical axis $\mathbf{e}_z$ onto the query-frame direction $\mathbf{u}_{i, \beta_\ell}$.
The disparity encoding computes the relative rotation $A_{i,p}^\top A_{i, \beta_\ell}$ from query ray $\mathbf{d}_{i,p}^c$ to candidate direction $\mathbf{u}_{i, \beta_\ell}$, repeated across $n$ coordinate triplets:
\begin{equation}
\mathcal{U}^{\mathrm{disp}}_{(i,p),(j,q)} := \bigoplus_{\ell=1}^{L} \underbrace{\bigoplus_{k=1}^{n} \left( A_{i,p}^\top A_{i, \beta_\ell} \right)}_{n \text{ times}}.
\label{eq:s2-rot}
\end{equation}
These anchors act as fixed geometric knots rather than predictions of mutually exclusive correspondences.  Together, their rotation blocks encode the query ray relative to several fixed directions along the continuous arc, while leaving the network to determine from visual content whether a valid 3D correspondence exists.

Degenerate configurations make the construction in Eq.~\eqref{eq:disparity-anchor} undefined: the epipole $\mathbf{e}$ is undefined when the camera centers coincide, while the great-circle direction $\hat{\mathbf{t}}$ is undefined when $\mathbf{e}$ is parallel or antiparallel to $\mathbf{u}_\infty$.  In either case, we collapse all anchor directions to $\mathbf{u}_\infty$ to avoid division by zero.

The complete pairwise \method operator $\mathcal{U}_{ab}$ is assembled by the block-diagonal direct sum:
\begin{equation}
\mathcal{U}_{ab} = \mathcal{U}^{\mathrm{disp}}_{(i,p),(j,q)} \oplus \mathcal{U}^{\mathrm{rot}}_{(i,p),(j,q)} \oplus \mathcal{U}^{\mathrm{trans}}_{ij} \oplus \mathcal{U}^{\mathrm{native}},
\label{eq:joint-operator-method}
\end{equation}
where $\mathcal{U}^{\mathrm{native}}$ retains the backbone's native spatio-temporal RoPE over the temporal and image-plane $(x,y)$ coordinate bands (Figure~\ref{fig:encoding-operator}).
Thus, \method satisfies (A+C): it uses the full metric $T_i^{-1}T_j$ without scale normalization, while $\mathcal{U}_{ab}^\top\mathcal{U}_{ab}=I$ ensures norm preservation and baseline-independent logits.

\begin{wrapfigure}{r}{0.42\textwidth}
    \vspace{-6mm}
    \centering
    \includegraphics[width=\linewidth]{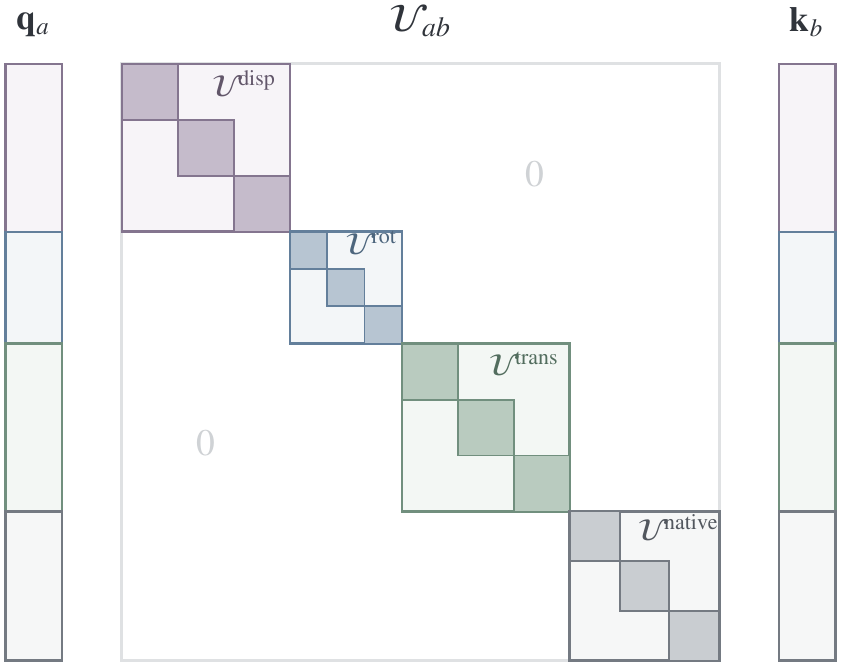}
    \vspace{-6mm}
    \caption{\textbf{Block-diagonal structure of $\mathcal{U}_{ab}$.}
    Pairwise attention query-key modulation $\mathbf{q}_a^\top \mathcal{U}_{ab} \mathbf{k}_b$, assembled as the orthogonal direct sum of disparity MinRot $\mathcal{U}^{\mathrm{disp}}$, ray-local MinRot $\mathcal{U}^{\mathrm{rot}}$, translation RoPE $\mathcal{U}^{\mathrm{trans}}$, and native RoPE $\mathcal{U}^{\mathrm{native}}$.}
    \label{fig:encoding-operator}
    \vspace{-14mm}
\end{wrapfigure}

\paragraph{Integration into Diffusion Transformers.}
Following the adapter scheme in UCPE~\citep{zhang2026ucpe}, we integrate \method into pretrained video Diffusion Transformers through a lightweight parallel attention branch.
The base model retains its original self-attention and spatio-temporal representations, while the parallel branch evaluates camera-conditioned attention modulated by $\mathcal{U}_{ab}$.
The resulting features are mapped back through a zero-initialized linear projection and added residually to the backbone stream, so the pretrained backbone features remain unchanged at initialization.

\section{Experiments}
\label{sec:experiments}

We evaluate \method on two benchmarks: large-baseline driving video generation on nuScenes~\citep{caesar2020nuscenes} (Section~\ref{sec:exp_video}) and multi-lens camera control on PanShot~\citep{zhang2026ucpe,zhang2026ucpecode} (Section~\ref{sec:exp_panshot}).

\subsection{Experimental Setup}
\label{sec:exp_setup}

\begin{wrapfigure}{r}{0.38\textwidth}
    \vspace{-16mm}
    \centering
    \includegraphics[width=\linewidth]{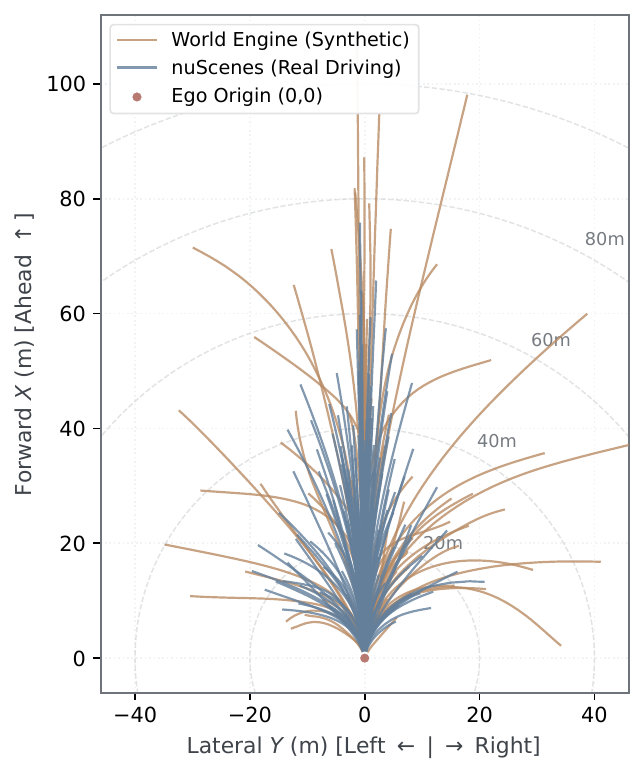}
    \vspace{-6mm}
    \caption{\textbf{Training trajectory distribution.}
    Bird's-eye view (BEV) of motion patterns in the training mixture: forward-dominant corridors from nuScenes (blue, $103.6\text{k}$) and wide lateral/turning maneuvers from World Engine (orange, $244.1\text{k}$).}
    \label{fig:train-trajectories}
    \vspace{-5mm}
\end{wrapfigure}

\paragraph{Datasets and training recipe.}
For driving video generation, training data combines real clips from the nuScenes training set ($103.6\text{k}$ clips) with synthetic trajectories rendered in 3D Gaussian environments by World Engine~\citep{li2026worldengine} ($244.1\text{k}$ clips), totaling $347.7\text{k}$ unique training sequences (Figure~\ref{fig:train-trajectories}).
Candidate trajectories are grouped using $k$-means and filtered by non-maximum suppression (NMS) with a Fr\'echet distance threshold of $1\,\text{m}$ to obtain representative motion prototypes. During training, class-balanced sampling draws mini-batches uniformly across trajectory clusters.
All models are trained at $288 \times 512$ resolution with $49$ frames, conditioned on the first $9$ RGB frames, commanded camera trajectory parameters, and text prompts.
Training uses the AdamW optimizer in bfloat16 mixed precision on $64 \times \text{NVIDIA H20}$ GPUs for $20\text{k}$ iterations with a global batch size of $64$ (details in Appendix~\ref{app:details-training}).

\paragraph{Evaluation protocols.}
We evaluate driving scenes on $128$ representative sequences selected from the nuScenes validation and test sets to cover diverse trajectory geometries. Each base trajectory is augmented with left-turn and right-turn variations, yielding $384$ test trajectories in total.
Following standard visual odometry evaluation, camera poses are extracted from generated frames using VGGT-$\Omega$~\citep{wang2026vggtomega}.
Quantitative metrics report geodesic rotation error ($\text{rot}^\circ$), scale-aligned relative translation error ($\text{tr}\%$), pairwise joint-pose AUC ($\text{AUC}@3$, $\text{AUC}@10$), and camera motion consistency ($\text{CamMC}$).
Visual quality is evaluated separately on $512$ nuScenes clips using FID over $20{,}480$ predicted frames and FVD over $512$ generated videos after excluding the $9$-frame conditioning prefix.
Appendix~\ref{app:details-metrics} defines all metrics mathematically.

\paragraph{Matched-backbone baselines.}
For a matched-backbone comparison, all methods share the same Wan2.2 TI2V-5B backbone~\citep{wan2025,wan2025wan22}, initialization, training mixture, optimization schedule, and attention head dimension ($d = 128$).
All camera-conditioned methods are retrained with the same $15$-block alternating injection schedule.
GTA, PRoPE, and UCPE use a common fixed global translation scale estimated solely from training-set motion statistics.
We compare against: (i)~the backbone without camera PE, (ii)~GTA~\citep{miyato2024gta}, (iii)~PRoPE~\citep{li2025prope}, (iv)~UCPE~\citep{zhang2026ucpe}, (v)~RayNova~\citep{xie2026raynova}, (vi)~RayRoPE~\citep{wu2026rayrope}, (vii)~URoPE~\citep{xie2026urope}, and (viii)~a CameraCtrl-style baseline~\citep{he2025cameractrl}, adapted to the DiT backbone by adding ray features before the query and key projections following Lyra~2.0~\citep{shen2026lyra2}.

\subsection{Pose-Controlled Video Generation on Driving Benchmarks}
\label{sec:exp_video}

\begin{table*}[t]
\centering
\caption{\textbf{Camera pose controllability on the nuScenes driving benchmark.} Camera-conditioned methods are ordered by CamMC from highest to lowest. Best in \textbf{bold}; second-best is underlined.}
\label{tab:main-video}
\setlength{\tabcolsep}{3.5pt}
\small
\begin{tabular}{lccccccc}
\toprule
Method & rot$^\circ$$\downarrow$ & tr\%$\,\downarrow$
& AUC@3$\uparrow$ & AUC@10$\uparrow$ & CamMC$\downarrow$ & FID$\downarrow$ & FVD$\downarrow$ \\
\midrule
No camera PE & $5.57$ & $5.94$ & $17.21$ & $52.05$ & $7.96$ & $21.06$ & $153.31$ \\
\midrule
RayRoPE~\citep{wu2026rayrope} & $1.83$ & $2.97$ & $36.56$ & $76.62$ & $2.78$ & $21.21$ & $150.26$ \\
PRoPE~\citep{li2025prope} & $1.53$ & $3.10$ & $42.11$ & $78.58$ & $2.67$ & $20.40$ & $142.16$ \\
RayNova~\citep{xie2026raynova} & $1.56$ & $3.02$ & $41.19$ & $78.42$ & $2.66$ & $20.15$ & $144.91$ \\
CameraCtrl-style~\citep{he2025cameractrl} & $1.54$ & $3.07$ & \underline{$42.57$} & $78.90$ & $2.63$ & $20.13$ & $142.23$ \\
URoPE~\citep{xie2026urope} & $1.66$ & $\mathbf{2.67}$ & $\mathbf{44.41}$ & $\mathbf{79.92}$ & $2.53$ & \underline{$19.98$} & \underline{$135.63$} \\
GTA~\citep{miyato2024gta} & $1.44$ & $3.03$ & $40.53$ & $77.72$ & $2.49$ & $20.00$ & $141.84$ \\
UCPE~\citep{zhang2026ucpe} & \underline{$1.39$} & $2.95$ & $41.21$ & $78.38$ & \underline{$2.45$} & $20.20$ & $136.40$ \\
\method & $\mathbf{1.37}$ & \underline{$2.84$}
& $42.11$ & \underline{$79.47$} & $\mathbf{2.32}$ & $\mathbf{19.82}$ & $\mathbf{134.15}$ \\
\bottomrule
\end{tabular}
\end{table*}

\paragraph{Main comparison on driving benchmarks.}
Table~\ref{tab:main-video} compares camera control across methods.  We use CamMC as the primary metric because it jointly evaluates rotation and translation trajectories, whereas thresholded AUC is less stable and less sensitive to translation magnitude.
\method records the lowest CamMC ($2.32$); UCPE is the strongest baseline at $2.45$.
It also achieves the best rotation accuracy and ranks second on both tr\% and AUC@10.
URoPE leads tr\% and the two AUC metrics, while the CameraCtrl-style baseline ranks second on AUC@3.
The weaker pose results without camera PE show the benefit of explicit camera geometry for controllable generation.
\method also attains the lowest FID and FVD point estimates, so its control gains do not coincide with worse measured visual quality.
Figure~\ref{fig:qual-pose} compares the commanded and recovered paths under three camera controls together with the corresponding generated frames.
Appendices~\ref{app:qual-sota} and~\ref{app:qual-pose-more} provide matched-backbone comparisons and additional nuScenes scenes, respectively.

\begin{figure*}[t]
  \centering
  \includegraphics[width=\textwidth]{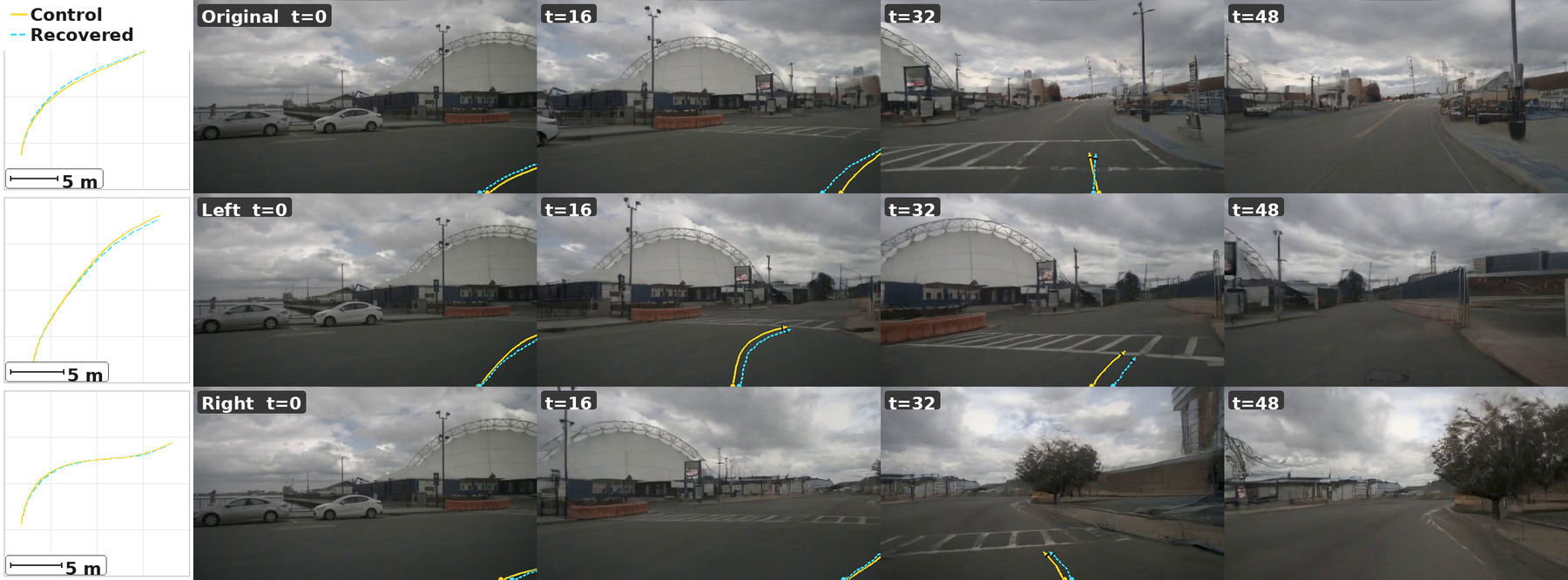}\\
  \includegraphics[width=\textwidth]{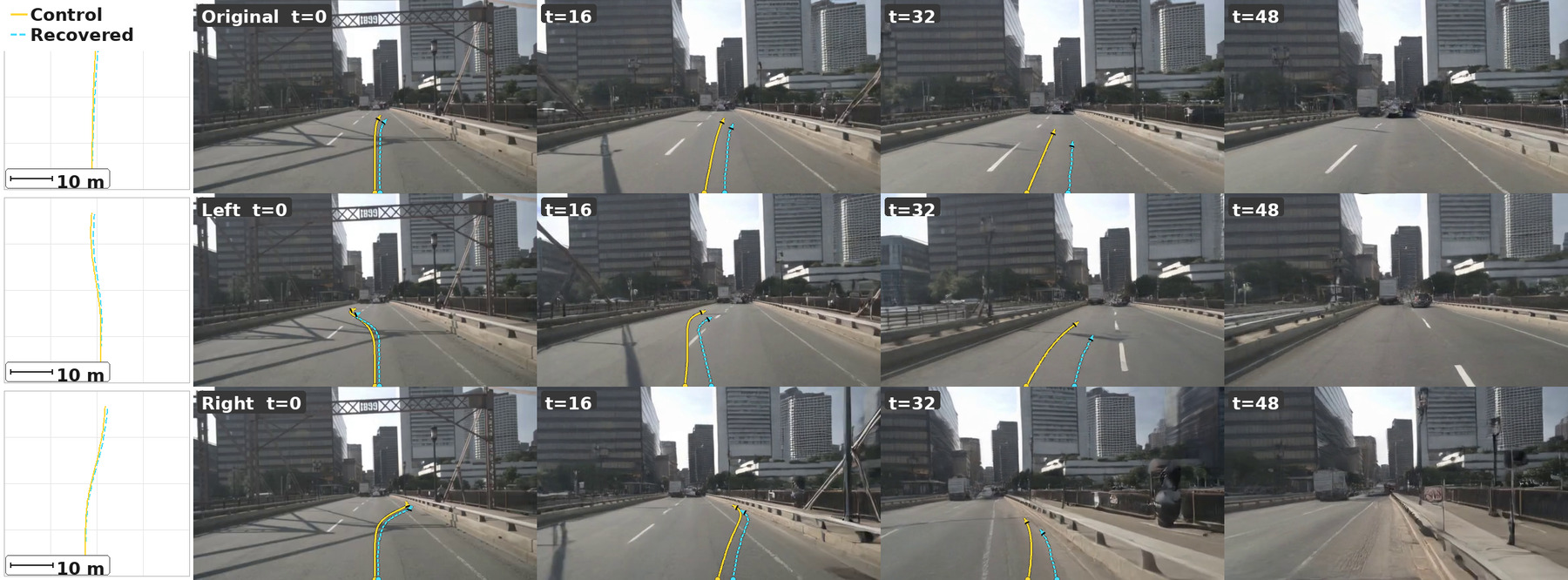}
  \caption{\textbf{Qualitative video generation under camera pose control.} Visual results on two representative nuScenes scenes under three commanded trajectories (original, left turn, right turn). In each row, the leftmost plot displays the bird's-eye view of the commanded path (solid) and the path recovered from the generated video by VGGT-$\Omega$ (dashed), followed by generated video frames sampled at $t \in \{0, 16, 32, 48\}$.}
  \label{fig:qual-pose}
\end{figure*}

\paragraph{Ablation of encoding components.}
Table~\ref{tab:encoding-ablation} shows that \method's three sub-blocks play distinct, complementary roles while keeping $d=128$ fixed through identity padding.
Removing metric translation degrades translation fidelity: all variants without it have substantially worse tr\% and CamMC, even though Disparity + rot. attains the highest AUC because AUC evaluates translation direction but not magnitude.
Ray-local rotation primarily controls orientation: adding it to Disparity + trans. reduces rotation error from $1.66^\circ$ to $1.37^\circ$.
Disparity anchors instead refine cross-view tracking: adding them to Rot. + trans. improves both AUC metrics and lowers tr\% and CamMC while leaving rotation essentially unchanged.
Together, these complementary gains yield the full model's lowest tr\% and CamMC.

\begin{table*}[t]
\centering
\caption{\textbf{Ablation of camera-encoding components on nuScenes.} Best in \textbf{bold}; second-best is underlined.}
\label{tab:encoding-ablation}
\scriptsize
\setlength{\tabcolsep}{2.5pt}
\begin{tabular}{lccc ccccc}
\toprule
& \multicolumn{3}{c}{Components} & \multicolumn{5}{c}{Overall Metrics} \\
\cmidrule(lr){2-4} \cmidrule(lr){5-9}
Method & camera rot. & metric trans. & disparity anchor
& rot$^\circ$$\downarrow$ & tr\%$\,\downarrow$
& AUC@3$\uparrow$ & AUC@10$\uparrow$ & CamMC$\downarrow$ \\
\midrule
\method & $\checkmark$ & $\checkmark$ & $\checkmark$
& \underline{$1.37$} & $\mathbf{2.84}$ & $42.11$ & $79.47$ & $\mathbf{2.32}$ \\
Rot. + trans. & $\checkmark$ & $\checkmark$ &
& $\mathbf{1.36}$ & $3.09$ & $38.82$ & $77.50$ & \underline{$2.40$} \\
Disparity + trans. & & $\checkmark$ & $\checkmark$
& $1.66$ & \underline{$2.88$} & $40.60$ & $78.82$ & $2.56$ \\
Disparity + rot. & $\checkmark$ & & $\checkmark$
& $1.48$ & $4.31$ & $\mathbf{44.42}$ & $\mathbf{79.78}$ & $3.66$ \\
Disparity only & & & $\checkmark$
& $1.59$ & $4.01$ & \underline{$43.19$} & \underline{$79.53$} & $3.56$ \\
Rotation only & $\checkmark$ & &
& $1.41$ & $4.01$ & $42.52$ & $78.33$ & $3.35$ \\
Translation only & & $\checkmark$ &
& $2.59$ & $3.09$ & $30.64$ & $73.50$ & $3.50$ \\
\bottomrule
\end{tabular}
\end{table*}

Appendices~\ref{app:architecture} and~\ref{app:history} analyze adapter capacity, injection strategies, layer-wise modulation, and real-world grounding with retrieved history.

\subsection{Camera-Diverse Evaluation on PanShot}
\label{sec:exp_panshot}

To complement the fixed-intrinsic nuScenes setting, we evaluate \method on the PanShot benchmark~\citep{zhang2026ucpe,zhang2026ucpecode}, which spans pinhole, wide-angle, and fisheye lenses under the unified camera model.
Following the matched protocol, we retrain every compared method from the same Wan2.1-T2V-1.3B backbone for $10\text{k}$ steps on $8 \times \text{NVIDIA H20}$ GPUs with identical spatial attention adapters and evaluate them on the official jitter-filtered test split ($272$ clips).
We compare camera-conditioning poses with near-depth normalization (a)~enabled or (b)~disabled; the latter preserves raw physical displacements during training and inference, while evaluation still applies the per-trajectory translation normalization in Appendix~\ref{app:details-metrics}.
Poses are extracted using VGGT-$\Omega$~\citep{wang2026vggtomega}.

\begin{table*}[t]
\centering
\caption{\textbf{Camera pose control on PanShot across diverse camera optics.} Evaluated on $272$ test clips under the unified camera model ($C=8$, $1\times 192$, VGGT-$\Omega$). (a)~Conditioning-pose near-depth normalization enabled. (b)~Conditioning-pose near-depth normalization disabled. GTA and PRoPE retain the latitude-up map; UCPE and \method do not. Lower is better. Best in each panel in \textbf{bold}; second-best is underlined.}
\label{tab:panshot-pose}
\small
\setlength{\tabcolsep}{4pt}
\begin{minipage}[t]{0.48\linewidth}
\centering
\textbf{(a)} Conditioning norm. on
\begin{tabular}{lccc}
\toprule
Method & RotErr$\downarrow$ & TransErr$\downarrow$ & CamMC$\downarrow$ \\
\midrule
GTA   & $5.14$ & $19.85$ & $22.59$ \\
PRoPE & $5.10$ & $20.38$ & $23.02$ \\
UCPE  & $\mathbf{4.22}$ & $\underline{15.15}$ & $\underline{17.50}$ \\
\method & $\underline{4.57}$ & $\mathbf{10.99}$ & $\mathbf{13.88}$ \\
\bottomrule
\end{tabular}
\end{minipage}
\hfill
\begin{minipage}[t]{0.48\linewidth}
\centering
\textbf{(b)} Conditioning norm. off
\begin{tabular}{lccc}
\toprule
Method & RotErr$\downarrow$ & TransErr$\downarrow$ & CamMC$\downarrow$ \\
\midrule
GTA   & $5.30$ & $22.70$ & $25.34$ \\
PRoPE & $5.11$ & $23.04$ & $25.58$ \\
UCPE  & $\mathbf{4.40}$ & $\underline{17.91}$ & $\underline{20.13}$ \\
\method & $\underline{4.82}$ & $\mathbf{12.66}$ & $\mathbf{15.56}$ \\
\bottomrule
\end{tabular}
\end{minipage}
\end{table*}

Table~\ref{tab:panshot-pose} shows that \method's advantage on PanShot lies in translation control rather than rotation.
Across both conditioning-pose settings, it achieves the lowest TransErr and CamMC, while UCPE and \method remain close on RotErr.
The wider separation between UCPE and PRoPE than on fixed-intrinsic nuScenes is consistent with the benefit of calibrated ray representations when camera optics vary.
Using raw conditioning displacements during training and generation increases TransErr and CamMC for all methods, but \method retains the lowest errors and the smallest increase.
Figure~\ref{fig:qual-panshot} shows generated sequences across varying fields of view and lens distortions.

\begin{figure*}[!ht]
  \centering
  \includegraphics[width=\textwidth]{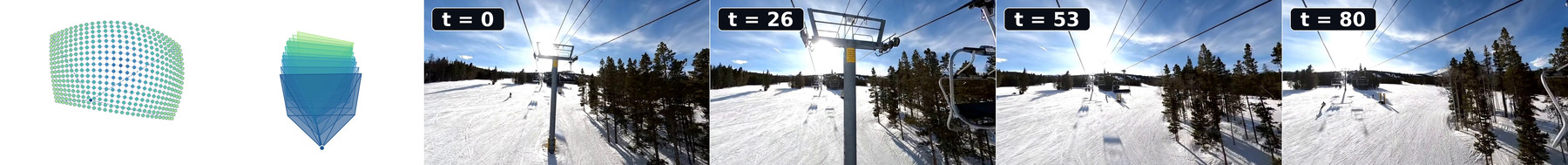}\\[2pt]
  \includegraphics[width=\textwidth]{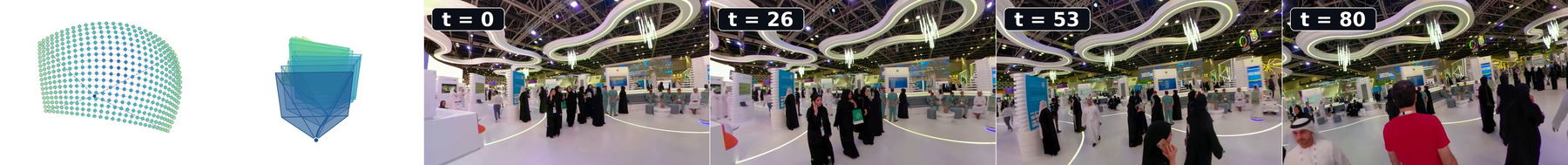}\\[2pt]
  \includegraphics[width=\textwidth]{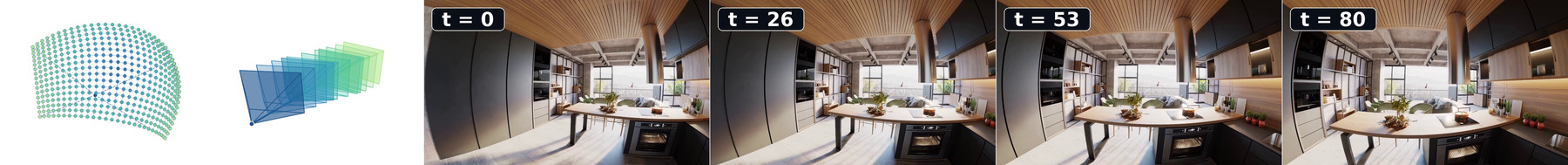}\\[2pt]
  \includegraphics[width=\textwidth]{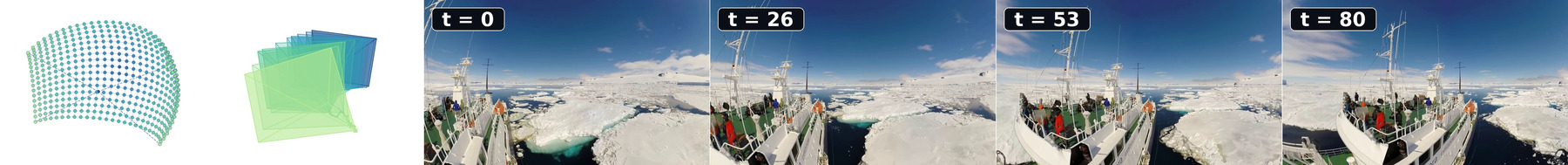}
  \caption{\textbf{PanShot generations across camera optics.} Rows~1--4 span $(\text{FoV},\xi)$ from $(97^\circ,0.00)$ to $(173^\circ,1.66)$, covering pinhole through fisheye cameras.}
  \label{fig:qual-panshot}
\end{figure*}
\FloatBarrier

\section{Conclusion}
\label{sec:conclusion}

We introduced \method, a norm-preserving relative camera encoding for camera-controlled video generation under real-world metric motion.  Our analysis identifies baseline-dependent logit and feature-norm growth in homogeneous projective encodings and formalizes a three-way trade-off: retaining the complete metric relative pose, factorizing strictly per token, and preserving feature norms cannot be achieved simultaneously.  \method resolves this trade-off by relaxing per-token factorization through query-camera grouping, representing relative ray orientation and raw metric translation with orthogonal blocks, and adding disparity-anchored spherical rotations as geometric knots for cross-view relations.

Across large-baseline nuScenes and camera-diverse PanShot, \method achieves the lowest CamMC among the compared encodings; it also attains the lowest nuScenes FID/FVD point estimates and the lowest PanShot TransErr under both conditioning settings.  These results support norm-preserving metric camera encoding as a stable basis for camera control across physical baselines and camera optics.  Query-camera grouping introduces additional local camera-attention cost; reducing this overhead while retaining metric sensitivity is an important direction for future work.

\subsection*{AI use statement}
Generative AI tools assisted code implementation and language editing. All outputs were reviewed and verified by the authors, who take full responsibility for this paper.

\clearpage
\bibliography{references}
\bibliographystyle{iclr2027_conference}

\appendix
\section{Impossibility of Unitary Relative Per-Token Encodings for Metric Translation}
\label{app:per-token}

We show that no strictly per-token, unitary (or orthogonal) positional operator can completely capture metric relative translation. This result formalizes the trilemma introduced in Section~\ref{sec:intro} and motivates \method's use of query-camera grouping.

\subsection{Problem Formulation and Hypotheses}

Let $T_i = (R_i, \mathbf{o}_i) \in SE(3)$ denote the camera-to-world pose of view $i$, where $R_i \in \mathrm{SO}(3)$ and $\mathbf{o}_i \in \mathbb{R}^3$. The intrinsic relative pose between views $i$ and $j$, invariant to any global coordinate change $G \in SE(3)$, is given by
\begin{equation}
T_i^{-1}T_j = \big(R_i^\top R_j,\, R_i^\top(\mathbf{o}_j - \mathbf{o}_i)\big).
\label{eq:relative-pose-se3}
\end{equation}
We examine the existence of a matrix-valued positional embedding operator $U: SE(3) \to \mathbb{C}^{d\times d}$ (or $\mathbb{R}^{d\times d}$) satisfying four standard axioms of relative positional attention:
\begin{enumerate}
    \item[\textnormal{(H1)}] \textbf{Multiplicative per-token factorization:} The camera positional operator acts independently on each query and key token via its own pose:
    \begin{equation}
    \widetilde{\mathbf{q}}_i = U(T_i)\mathbf{q}_i, \qquad \widetilde{\mathbf{k}}_j = U(T_j)\mathbf{k}_j, \qquad s_{ij} = \frac{\mathbf{q}_i^\ast U(T_i)^\ast U(T_j) \mathbf{k}_j}{\sqrt{d}}.
    \end{equation}
    \item[\textnormal{(H2)}] \textbf{Exact relative-pose dependence:} The pairwise interaction depends strictly and exclusively on the relative pose $T_i^{-1}T_j$:
    \begin{equation}
    U(T_i)^\ast U(T_j) = U(T_i^{-1}T_j), \qquad \forall T_i, T_j \in SE(3).
    \label{eq:relative-axiom}
    \end{equation}
    \item[\textnormal{(H3)}] \textbf{Norm preservation (unitarity):} $U(T)$ is unitary (or orthogonal in the real case), so that $\|U(T)\mathbf{x}\|_2 = \|\mathbf{x}\|_2$ for all $\mathbf{x} \in \mathbb{C}^d$, i.e.,
    \begin{equation}
    U(T)^\ast U(T) = I_d, \qquad \forall T \in SE(3).
    \label{eq:unitarity-axiom}
    \end{equation}
    \item[\textnormal{(H4)}] \textbf{Finite-dimensional continuity:} The feature dimension is finite ($d < \infty$), and the mapping $T \mapsto U(T)$ is continuous with respect to the standard Lie group topology on $SE(3)$.
\end{enumerate}

\subsection{The Impossibility Theorem}

\begin{theorem}[Translation Blindness of Unitary Relative Per-Token Encodings]
\label{thm:translation-blindness}
Let $U: SE(3) \to \mathrm{U}(d)$ be any continuous map satisfying hypotheses \textnormal{(H1)}--\textnormal{(H4)}. Then for every pure metric translation $\mathbf{t} \in \mathbb{R}^3$,
\begin{equation}
U(I_3, \mathbf{t}) = I_d.
\end{equation}
Consequently, for any arbitrary camera pose $(R, \mathbf{t}) \in SE(3)$,
\begin{equation}
U(R, \mathbf{t}) = U(R, \mathbf{0}).
\end{equation}
The positional operator $U$ is identically trivial on metric translations, failing to distinguish any non-zero relative translation $\Delta\mathbf{o}_{j\mid i} = R_i^\top(\mathbf{o}_j - \mathbf{o}_i) \neq \mathbf{0}$.
\end{theorem}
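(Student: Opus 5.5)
The plan is to show that (H2)--(H3) force $U$ to be a continuous finite-dimensional unitary representation of $SE(3)$. I would then use the fact that the translation subgroup $\mathbb{R}^3$ is normal and is acted on transitively (on spheres) by $\mathrm{SO}(3)$, which leaves no room for nontrivial finite-dimensional unitary characters. In the real orthogonal case I regard $\mathrm{O}(d)\subset\mathrm{U}(d)$, so it suffices to treat the complex case.

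\textbf{Step 1 (homomorphism).} Setting $T_i=T_j=T$ in \eqref{eq:relative-axiom} and using \eqref{eq:unitarity-axiom} gives $U(I)=U(T)^\ast U(T)=I_d$. For general $T_i,T_j$, unitarity gives $U(T_i)^{-1}=U(T_i)^\ast$. Hence \eqref{eq:relative-axiom} reads $U(T_j)=U(T_i)\,U(T_i^{-1}T_j)$. Writing $S=T_i^{-1}T_j$, this becomes $U(T_iS)=U(T_i)U(S)$ for all $T_i,S$. By (H4), $U$ is therefore a continuous homomorphism $SE(3)\to\mathrm{U}(d)$.

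\textbf{Step 2 (weights of the translation subgroup).} The restriction $\mathbf{t}\mapsto U(I_3,\mathbf{t})$ is a continuous unitary representation of the abelian group $\mathbb{R}^3$. The operators $U(I,\mathbf{t})$ commute and are unitary, so they are simultaneously diagonalizable in a single unitary basis. Each diagonal entry $\chi_m$ is a continuous character $\mathbb{R}^3\to S^1$, hence of the form $\chi_m(\mathbf{t})=e^{i\mathbf{k}_m^\top\mathbf{t}}$ for some $\mathbf{k}_m\in\mathbb{R}^3$. This uses the standard fact that continuous homomorphisms $\mathbb{R}\to S^1$ are $t\mapsto e^{ikt}$, applied coordinatewise. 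So $U(I,\mathbf{t})$ is determined by a finite multiset of weights $\{\mathbf{k}_1,\dots,\mathbf{k}_d\}$.

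\textbf{Step 3 (rotation invariance of the weights).} In $SE(3)$ we have $(R,\mathbf{0})(I,\mathbf{t})(R,\mathbf{0})^{-1}=(I,R\mathbf{t})$. By Step 1 this gives $U(I,R\mathbf{t})=U(R,\mathbf{0})\,U(I,\mathbf{t})\,U(R,\mathbf{0})^{-1}$. Consequently, for every fixed $\mathbf{t}$, the spectra of $U(I,R\mathbf{t})$ and $U(I,\mathbf{t})$ coincide with multiplicity. Passing from spectra to weights is the delicate point, and I expect it to be the main obstacle.

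To handle it, I would note that conjugation by $V=U(R,\mathbf{0})$ maps the joint eigenspace of weight $\mathbf{k}$ for the family $\{U(I,\cdot)\}$ onto the joint eigenspace of weight $R\mathbf{k}$. Indeed, if $U(I,\mathbf{s})x=e^{i\mathbf{k}^\top\mathbf{s}}x$ for all $\mathbf{s}$, then
\[
U(I,\mathbf{s})Vx=V\,U(I,R^\top\mathbf{s})x=e^{i(R\mathbf{k})^\top\mathbf{s}}Vx .
\]
Hence the multiset of weights is invariant under every $R\in\mathrm{SO}(3)$. A nonzero $\mathbf{k}$ has an infinite orbit, namely the sphere of radius $\|\mathbf{k}\|$, which cannot fit inside a multiset of size $d<\infty$. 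Therefore all weights vanish, and $U(I,\mathbf{t})=I_d$ for all $\mathbf{t}$.

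\textbf{Step 4 (conclusion).} The composition rule gives $(R,\mathbf{t})=(I,\mathbf{t})(R,\mathbf{0})$. Step 1 then yields $U(R,\mathbf{t})=U(I,\mathbf{t})U(R,\mathbf{0})=U(R,\mathbf{0})$. Applying this to $T_i^{-1}T_j$ shows that the score in (H1) is independent of $\Delta\mathbf{o}_{j\mid i}$, which is the claimed translation blindness.
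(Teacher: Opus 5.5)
Your proposal is correct and follows essentially the same route as the paper. It derives the homomorphism property from (H2)--(H3), simultaneously diagonalizes the commuting unitary translation operators into continuous characters $e^{i\mathbf{k}^\top\mathbf{t}}$, and uses $(R,\mathbf{0})(I_3,\mathbf{t})(R,\mathbf{0})^{-1}=(I_3,R\mathbf{t})$ to show that $U(R,\mathbf{0})$ carries the weight-$\mathbf{k}$ joint eigenspace into the weight-$R\mathbf{k}$ one, which forces the finite weight set to be $\{\mathbf{0}\}$. The only cosmetic differences are that you get $U(I)=I_d$ from $T_i=T_j=T$ rather than $T_i=T_j=I$, and you track a multiset of weights where the paper uses the set $\Omega$.
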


\begin{proof}
\noindent\textbf{Step 1: Homomorphism deduction.}
First, we show that (H2) and (H3) enforce $U$ to be a group representation of $SE(3)$, i.e., $U(gh) = U(g)U(h)$ for all $g, h \in SE(3)$:
\begin{itemize}
    \item Setting $T_i = T_j = I$ in Eq.~\eqref{eq:relative-axiom} yields
    $U(I)^\ast U(I) = U(I)$. On the other hand,
    Eq.~\eqref{eq:unitarity-axiom} with $T = I$ gives
    $U(I)^\ast U(I) = I_d$. The two right-hand sides
    must coincide, so $U(I) = I_d$.
    \item Setting $T_j = I$ in Eq.~\eqref{eq:relative-axiom} yields
    $U(T_i)^\ast U(I) = U(T_i^{-1})$. Since $U(I) = I_d$ from the first
    item, the left-hand side reduces to $U(T_i)^\ast$, and by unitarity
    ($U(T_i)^{-1} = U(T_i)^\ast$) we conclude
    $U(T_i^{-1}) = U(T_i)^\ast = U(T_i)^{-1}$.
    \item For any $g, h \in SE(3)$, set $T_i = g$ and $T_j = gh$. Substituting into Eq.~\eqref{eq:relative-axiom} gives:
    \begin{equation}
    U(g)^\ast U(gh) = U(g^{-1}(gh)) = U(h).
    \end{equation}
    Multiplying on the left by $U(g)$ and using $U(g)U(g)^\ast = I_d$, we conclude that
    \begin{equation}
    U(gh) = U(g)U(h), \qquad \forall g, h \in SE(3).
    \label{eq:homomorphism}
    \end{equation}
\end{itemize}

\noindent\textbf{Step 2: Commutativity of the translation subgroup.}
Let $U_{\mathrm{tr}}(\mathbf{t}) := U(I_3, \mathbf{t})$ denote the pure translation operator for $\mathbf{t} \in \mathbb{R}^3$. In $SE(3)$, pure translations commute: $(I_3, \mathbf{s})(I_3, \mathbf{t}) = (I_3, \mathbf{s}+\mathbf{t}) = (I_3, \mathbf{t})(I_3, \mathbf{s})$. Applying the homomorphism in Eq.~\eqref{eq:homomorphism}:
\begin{equation}
U_{\mathrm{tr}}(\mathbf{s})U_{\mathrm{tr}}(\mathbf{t})
= U\big((I_3,\mathbf{s})(I_3,\mathbf{t})\big)
= U(I_3,\mathbf{s}+\mathbf{t})
= U\big((I_3,\mathbf{t})(I_3,\mathbf{s})\big)
= U_{\mathrm{tr}}(\mathbf{t})U_{\mathrm{tr}}(\mathbf{s})
= U_{\mathrm{tr}}(\mathbf{s}+\mathbf{t}).
\label{eq:translation-commuting}
\end{equation}
Thus, $\{U_{\mathrm{tr}}(\mathbf{t}) : \mathbf{t} \in \mathbb{R}^3\}$ is a family of mutually commuting unitary matrices.

\noindent\textbf{Step 3: Simultaneous unitary diagonalization.}
Every unitary matrix is normal ($U^\ast U = UU^\ast = I$). Step~2 shows that the operators $\{U_{\mathrm{tr}}(\mathbf{t}) : \mathbf{t} \in \mathbb{R}^3\}$ pairwise commute, and each is unitary by~\textnormal{(H3)}, hence normal. The spectral theorem for commuting families of normal operators~\citep[Theorem~2.5.5]{horn2012matrix}, which applies to families of arbitrary cardinality, then guarantees a single, $\mathbf{t}$-independent unitary change-of-basis matrix $V \in \mathrm{U}(d)$ that simultaneously diagonalizes all $U_{\mathrm{tr}}(\mathbf{t})$:
\begin{equation}
V^\ast U_{\mathrm{tr}}(\mathbf{t}) V = \operatorname{diag}\big(\chi_1(\mathbf{t}), \chi_2(\mathbf{t}), \dots, \chi_d(\mathbf{t})\big).
\label{eq:simultaneous-diag}
\end{equation}
By (H4), for each $a \in \{1,\dots,d\}$, the diagonal entry $\chi_a: \mathbb{R}^3 \to \mathbb{C}$ is continuous. By (H3), $|\chi_a(\mathbf{t})| = 1$. It remains to show that each entry is multiplicative in $\mathbf{t}$. Conjugating Eq.~\eqref{eq:translation-commuting} by $V$ and inserting $VV^\ast = I_d$ between $U_{\mathrm{tr}}(\mathbf{s})$ and $U_{\mathrm{tr}}(\mathbf{t})$:
\begin{equation}
V^\ast U_{\mathrm{tr}}(\mathbf{s}+\mathbf{t}) V
= \big(V^\ast U_{\mathrm{tr}}(\mathbf{s}) V\big)\big(V^\ast U_{\mathrm{tr}}(\mathbf{t}) V\big).
\label{eq:conj-commuting}
\end{equation}
By Eq.~\eqref{eq:simultaneous-diag}, the left-hand side equals $\operatorname{diag}\big(\chi_1(\mathbf{s}+\mathbf{t}), \dots, \chi_d(\mathbf{s}+\mathbf{t})\big)$, and the right-hand side equals $\operatorname{diag}\big(\chi_1(\mathbf{s})\chi_1(\mathbf{t}), \dots, \chi_d(\mathbf{s})\chi_d(\mathbf{t})\big)$. Comparing diagonal entries, each entry satisfies the multiplicative Cauchy character equation:
\begin{equation}
\chi_a(\mathbf{s}+\mathbf{t}) = \chi_a(\mathbf{s})\chi_a(\mathbf{t}), \qquad \forall \mathbf{s}, \mathbf{t} \in \mathbb{R}^3.
\end{equation}

\noindent\textbf{Step 4: Character form and finite spatial spectrum.}
Fixing an index $a$ and dropping it, we classify the continuous maps $\chi:(\mathbb{R}^3,+)\to\mathbb{T}$, where $\mathbb{T}=\{z\in\mathbb{C}:|z|=1\}$, satisfying the multiplicative character equation
\begin{equation}
\chi(\mathbf{s}+\mathbf{t})=\chi(\mathbf{s})\chi(\mathbf{t}), \qquad |\chi|=1 ,
\label{eq:character-equation}
\end{equation}
whose only solutions are spatial plane waves; this is the classical
classification of continuous characters of $(\mathbb{R}^3,+)$~\citep[Theorem~4.6 and Corollary~4.8]{folland2016course}:
\begin{equation}
\chi_a(\mathbf{t}) = \exp(i\boldsymbol{\omega}_a^\top \mathbf{t}), \qquad \text{for a unique frequency vector } \boldsymbol{\omega}_a \in \mathbb{R}^3.
\end{equation}
This classification enables the spherical-orbit contradiction below.
Consequently, in the eigenbasis of $V$, every translation operator takes the standard multi-frequency phase form:
\begin{equation}
V^\ast U_{\mathrm{tr}}(\mathbf{t}) V = \operatorname{diag}\big(\exp(i\boldsymbol{\omega}_1^\top \mathbf{t}), \dots, \exp(i\boldsymbol{\omega}_d^\top \mathbf{t})\big).
\end{equation}
Because the feature dimension is finite ($d < \infty$), the active frequency spectrum
\begin{equation}
\Omega := \{\boldsymbol{\omega}_1, \boldsymbol{\omega}_2, \dots, \boldsymbol{\omega}_d\} \subset \mathbb{R}^3
\end{equation}
contains at most $d$ discrete frequency vectors ($|\Omega| \le d$).

\noindent\textbf{Step 5: Semi-direct product conjugation and spherical orbit contradiction.}
\paragraph{A lemma on rotation invariance.} If a finite set $\Omega \subset \mathbb{R}^3$ is $\mathrm{SO}(3)$-invariant, i.e.\ $R\Omega = \Omega$ for all $R \in \mathrm{SO}(3)$, then $\Omega = \{\mathbf{0}\}$. Indeed, the orbit of any non-zero $\boldsymbol{\omega} \in \Omega$ under $\mathrm{SO}(3)$ is the entire sphere $\{\mathbf{u} \in \mathbb{R}^3 : \|\mathbf{u}\|_2 = \|\boldsymbol{\omega}\|_2\}$, an uncountably infinite set; invariance forces this whole sphere into $\Omega$, which finiteness forbids.

Let $U_{\mathrm{rot}}(R) := U(R, \mathbf{0})$ denote the pure rotation operator for $R \in \mathrm{SO}(3)$. In $SE(3)$, rotations and translations obey the semi-direct product conjugation:
\begin{equation}
(I_3, R\mathbf{t}) = (R, \mathbf{0})(I_3, \mathbf{t})(R, \mathbf{0})^{-1}, \qquad \forall R \in \mathrm{SO}(3), \ \mathbf{t} \in \mathbb{R}^3.
\label{eq:se3-conjugation}
\end{equation}
Applying the homomorphism $U$ to Eq.~\eqref{eq:se3-conjugation} gives the
operator-level conjugation
\begin{equation}
U_{\mathrm{tr}}(R\mathbf{t}) = U_{\mathrm{rot}}(R) U_{\mathrm{tr}}(\mathbf{t}) U_{\mathrm{rot}}(R)^{-1}.
\label{eq:op-conjugation}
\end{equation}
Since $\mathbf{t}$ is a free variable ranging over $\mathbb{R}^3$, replacing it by
$R^{-1}\mathbf{t}$ and then multiplying both sides on the right by $U_{\mathrm{rot}}(R)$
(the factors $U_{\mathrm{rot}}(R)^{-1}U_{\mathrm{rot}}(R)=I_d$ cancel) recasts the same
fact in the commutation form
\begin{equation}
U_{\mathrm{tr}}(\mathbf{t}) U_{\mathrm{rot}}(R) = U_{\mathrm{rot}}(R) U_{\mathrm{tr}}(R^{-1}\mathbf{t}),
\label{eq:op-commutation}
\end{equation}
which is the form we use below to push $U_{\mathrm{rot}}(R)$ past $U_{\mathrm{tr}}(\mathbf{t})$ when acting on an eigenvector.
For any frequency $\boldsymbol{\omega} \in \Omega$, define its joint eigenspace, and set $E_{\boldsymbol{\omega}}=\{\mathbf{0}\}$ for $\boldsymbol{\omega}\notin\Omega$:
\begin{equation}
E_{\boldsymbol{\omega}} = \big\{ \mathbf{v} \in \mathbb{C}^d : U_{\mathrm{tr}}(\mathbf{t})\mathbf{v} = \exp(i\boldsymbol{\omega}^\top \mathbf{t})\mathbf{v}, \ \forall \mathbf{t} \in \mathbb{R}^3 \big\},
\quad
E_{\boldsymbol{\omega}}=\{\mathbf{0}\}\ \text{if}\ \boldsymbol{\omega}\notin\Omega .
\end{equation}
For any non-zero $\mathbf{v} \in E_{\boldsymbol{\omega}}$, we apply Eq.~\eqref{eq:op-commutation} to evaluate the translation action on $U_{\mathrm{rot}}(R)\mathbf{v}$:
\begin{equation}
\begin{aligned}
U_{\mathrm{tr}}(\mathbf{t}) \big(U_{\mathrm{rot}}(R)\mathbf{v}\big)
&= U_{\mathrm{rot}}(R) U_{\mathrm{tr}}(R^{-1}\mathbf{t})\mathbf{v} \\
&= U_{\mathrm{rot}}(R) \exp(i\boldsymbol{\omega}^\top R^{-1}\mathbf{t})\mathbf{v} \\
&= \exp(i(R\boldsymbol{\omega})^\top \mathbf{t}) \big(U_{\mathrm{rot}}(R)\mathbf{v}\big).
\end{aligned}
\end{equation}
Here the second equality uses that $\mathbf{v}\in E_{\boldsymbol{\omega}}$ is an
eigenvector of $U_{\mathrm{tr}}(R^{-1}\mathbf{t})$; the last uses that the
(unit-modulus) eigenvalue $\exp(i\boldsymbol{\omega}^\top R^{-1}\mathbf{t})$ is a
complex scalar, hence commutes with $U_{\mathrm{rot}}(R)$ and can be pulled out
of the product, together with $\boldsymbol{\omega}^\top R^{-1}\mathbf{t}=(R\boldsymbol{\omega})^\top\mathbf{t}$.
Thus, $U_{\mathrm{rot}}(R)\mathbf{v}$ is an eigenvector of $U_{\mathrm{tr}}(\mathbf{t})$ with spatial frequency $R\boldsymbol{\omega}$, meaning $U_{\mathrm{rot}}(R)E_{\boldsymbol{\omega}} = E_{R\boldsymbol{\omega}}$.
This necessitates that the frequency spectrum $\Omega$ must be invariant under all continuous 3D rotations:
\begin{equation}
R\Omega = \Omega, \qquad \forall R \in \mathrm{SO}(3).
\label{eq:so3-invariance}
\end{equation}
By Eq.~\eqref{eq:so3-invariance} and the finiteness $|\Omega| \le d < \infty$ from Step~4, $\Omega$ is a finite $\mathrm{SO}(3)$-invariant subset of $\mathbb{R}^3$; the lemma therefore forces $\Omega = \{\mathbf{0}\}$.

Substituting $\boldsymbol{\omega}_a = \mathbf{0}$ back into the diagonalized form yields
\begin{equation}
U_{\mathrm{tr}}(\mathbf{t}) = V I_d V^\ast = I_d, \qquad \forall \mathbf{t} \in \mathbb{R}^3.
\end{equation}
Finally, decomposing any arbitrary pose as $(R, \mathbf{t}) = (I_3, \mathbf{t})(R, \mathbf{0})$ and applying the homomorphism yields $U(R, \mathbf{t}) = U_{\mathrm{tr}}(\mathbf{t})U_{\mathrm{rot}}(R) = U(R, \mathbf{0})$. This completes the proof.
\end{proof}

\section{Implementation and Evaluation Details}
\label{app:details}

\subsection{Architecture and Training Configurations}
\label{app:details-training}

\paragraph{Head partition.}
In our primary implementation based on the Wan2.2 TI2V-5B backbone ($d = 128$), the channel budget is partitioned into four orthogonal sub-blocks: $d = d_{\mathrm{disp}} + d_{\mathrm{rot}} + d_{\mathrm{trans}} + d_{\mathrm{native}} = 36 + 36 + 24 + 32$. Specifically, $d_{\mathrm{disp}}=36$ encodes $L=6$ non-uniform angular disparity fractions $\rho_\ell \in \{0, 0.05, 0.15, 0.4, 0.7, 1\}$ with $n=2$ triplets each ($6 \times 2 \times 3$); $d_{\mathrm{rot}}=36$ allocates $m=12$ coordinate triplets for ray-local MinRot; $d_{\mathrm{trans}}=24$ covers $3$ spatial axes over $K=4$ log-spaced metric wavelengths ($\lambda \in [0.5, 200]$\,m); and $d_{\mathrm{native}}=32$ applies standard RoPE to temporal ($16$D) and image-plane $x/y$ ($8$D each) coordinate bands.

\paragraph{Optimization details.}
All driving models are trained with AdamW ($\beta_1 = 0.9$, $\beta_2 = 0.999$, $\epsilon = 10^{-8}$, weight decay $0.01$) using a constant learning rate of $1 \times 10^{-5}$ and no warmup. Training uses bfloat16 mixed precision on $64 \times \text{NVIDIA H20}$ GPUs for $20\text{k}$ iterations with a global batch size of $64$. PanShot models are trained on $8 \times \text{NVIDIA H20}$ GPUs for $10\text{k}$ iterations.

\subsection{Evaluation Metrics and Mathematical Formulations}
\label{app:details-metrics}

Estimated camera poses are recovered from generated frames by VGGT-$\Omega$~\citep{wang2026vggtomega}.
Let $T$ denote the number of frames in an evaluated sequence, and let $T_t = (R_t, \mathbf{o}_t) \in SE(3)$ and $\widehat{T}_t = (\widehat{R}_t, \widehat{\mathbf{o}}_t) \in SE(3)$ denote the commanded and estimated camera-to-world poses at frame $t \in \{1, \dots, T\}$.
To remove the absolute $SE(3)$ gauge, all poses are expressed relative to the first frame,
\begin{equation}
T_t^{\mathrm{rel}} = T_1^{-1} T_t = \big(R_1^\top R_t,\, R_1^\top(\mathbf{o}_t - \mathbf{o}_1)\big) =: \big(R_t^{\mathrm{rel}},\, \mathbf{o}_t^{\mathrm{rel}}\big),
\label{eq:eval-rel-pose}
\end{equation}
with $\widehat{T}_t^{\mathrm{rel}} = (\widehat{R}_t^{\mathrm{rel}}, \widehat{\mathbf{o}}_t^{\mathrm{rel}})$ constructed analogously.
The geodesic rotation error in degrees is
\begin{equation}
\angle(R,\widehat{R})
= \arccos\!\left(\operatorname{clamp}\!\left(\frac{\operatorname{tr}(R^\top\widehat{R})-1}{2},-1,1\right)\right)\times\frac{180^\circ}{\pi}.
\label{eq:eval-geodesic}
\end{equation}
Reported driving scores average equally over the $384$ test clips; PanShot scores average over the $272$ jitter-filtered clips.
The pairwise AUC below is a micro-average over all within-clip frame pairs.

Monocular pose recovery leaves a global scale ambiguity.
On driving evaluations we therefore apply a single least-squares scale to the estimated camera centers, leaving rotations unchanged:
\begin{equation}
s^\star
= \frac{\sum_{t=1}^{T} \widehat{\mathbf{o}}_t^{\mathrm{rel}\top}\,\mathbf{o}_t^{\mathrm{rel}}}{\sum_{t=1}^{T} \|\widehat{\mathbf{o}}_t^{\mathrm{rel}}\|_2^2},
\qquad
\widetilde{\mathbf{o}}_t
= s^\star\,\widehat{\mathbf{o}}_t^{\mathrm{rel}}.
\label{eq:eval-scale}
\end{equation}
Following the official PanShot evaluation code~\citep{zhang2026ucpecode}, CamMC and PanShot translation metrics instead normalize the ground-truth and recovered trajectories independently by their respective maximum translation norms,
\begin{equation}
\bar{\mathbf{o}}_t
= \frac{\mathbf{o}_t^{\mathrm{rel}}}{\max_{1\le\tau\le T}\|\mathbf{o}_\tau^{\mathrm{rel}}\|_2+\epsilon_0},
\qquad
\bar{\widehat{\mathbf{o}}}_t
= \frac{\widehat{\mathbf{o}}_t^{\mathrm{rel}}}{\max_{1\le\tau\le T}\|\widehat{\mathbf{o}}_\tau^{\mathrm{rel}}\|_2+\epsilon_0},
\label{eq:eval-maxnorm}
\end{equation}
with $\epsilon_0=10^{-9}$.
The two scale treatments are not interchangeable: $\text{tr}\%$ uses $s^\star$, whereas $\text{CamMC}$ and $\text{TransErr}$ use Eq.~\eqref{eq:eval-maxnorm}; both rotation metrics are invariant to translation scaling.

\paragraph{Rotation Error ($\mathbf{\text{rot}^\circ}$).}
Used on driving evaluations.
The first-frame relative pose is the identity, so the mean excludes $t=1$:
\begin{equation}
\text{rot}^\circ
= \frac{1}{T-1}\sum_{t=2}^{T} \angle\!\big(R_t^{\mathrm{rel}},\,\widehat{R}_t^{\mathrm{rel}}\big).
\label{eq:eval-rot}
\end{equation}

\paragraph{Relative Translation Error ($\mathbf{\text{tr}\%}$).}
Used on driving evaluations.
After the scale alignment in Eq.~\eqref{eq:eval-scale}, the mean center error is normalized by the ground-truth endpoint displacement (chord length):
\begin{equation}
\text{tr}\%
= \frac{\frac{1}{T-1}\sum_{t=2}^{T}\|\mathbf{o}_t^{\mathrm{rel}}-\widetilde{\mathbf{o}}_t\|_2}{\|\mathbf{o}_T^{\mathrm{rel}}\|_2}\times 100\%.
\label{eq:eval-tr}
\end{equation}

\paragraph{Pairwise Joint Pose AUC ($\mathbf{\text{AUC}@\tau}$).}
Used on driving evaluations.
Let $\mathcal{P}=\{(i,j):1\le i<j\le T\}$ denote within-clip frame pairs, pooled over the evaluation set.
For each pair, form the relative motions $T_{i\to j}=(T_i^{\mathrm{rel}})^{-1}T_j^{\mathrm{rel}}$ and $\widehat{T}_{i\to j}=(\widehat{T}_i^{\mathrm{rel}})^{-1}\widehat{T}_j^{\mathrm{rel}}$, and define the unoriented translation-direction error
\begin{equation}
e_{ij}^{\mathrm{t}}
= \arccos\!\left(\operatorname{clamp}\!\left(\left|
\frac{\mathbf{o}_{i\to j}^\top\widehat{\mathbf{o}}_{i\to j}}{\|\mathbf{o}_{i\to j}\|_2\|\widehat{\mathbf{o}}_{i\to j}\|_2}
\right|,0,1\right)\right)\times\frac{180^\circ}{\pi},
\label{eq:eval-trans-dir}
\end{equation}
with $e_{ij}^{\mathrm{t}}$ set to a large sentinel when either translation vanishes.
The joint pair error is $e_{ij}=\max\bigl(\angle(R_{i\to j},\widehat{R}_{i\to j}),\,e_{ij}^{\mathrm{t}}\bigr)$.
Because $e_{ij}^{\mathrm{t}}$ depends only on direction, this metric does not apply $s^\star$.
With integer thresholds $\tau\in\{3,10\}$ (degrees), the discrete threshold average is
\begin{equation}
\text{AUC}@\tau
= \frac{1}{\tau}\sum_{k=1}^{\tau}
\frac{1}{|\mathcal{P}|}\sum_{(i,j)\in\mathcal{P}}\mathbb{I}\!\left(e_{ij}<k^\circ\right)\times 100\%.
\label{eq:eval-auc}
\end{equation}

\paragraph{Camera Motion Consistency ($\mathbf{\text{CamMC}}$).}
Let $P_t=[R_t^{\mathrm{rel}}\mid\bar{\mathbf{o}}_t]$ and $\widehat{P}_t=[\widehat{R}_t^{\mathrm{rel}}\mid\bar{\widehat{\mathbf{o}}}_t]$ be the $3\times 4$ relative pose matrices after Eq.~\eqref{eq:eval-maxnorm}.
Following UCPE, CamMC is the sum of per-frame Frobenius errors, then averaged over videos:
\begin{equation}
\text{CamMC}
= \sum_{t=1}^{T}\|P_t-\widehat{P}_t\|_F.
\label{eq:eval-cammc}
\end{equation}

\paragraph{PanShot Rotation Error ($\mathbf{\text{RotErr}}$).}
PanShot sums per-frame geodesic errors in radians, following its official evaluation code~\citep{zhang2026ucpecode}:
\begin{equation}
\text{RotErr}
= \sum_{t=1}^{T}\arccos\!\left(\operatorname{clamp}\!\left(\frac{\operatorname{tr}\!\left((R_t^{\mathrm{rel}})^\top\widehat{R}_t^{\mathrm{rel}}\right)-1}{2},-1,1\right)\right).
\label{eq:eval-panshot-rot}
\end{equation}

\paragraph{Translation Error ($\mathbf{\text{TransErr}}$).}
Used on PanShot evaluations, after the same per-trajectory max-norm as CamMC (not the driving scale $s^\star$):
\begin{equation}
\text{TransErr}
= \sum_{t=1}^{T}\|\bar{\mathbf{o}}_t-\bar{\widehat{\mathbf{o}}}_t\|_2.
\label{eq:eval-transerr}
\end{equation}

\section{Additional Experimental Analyses}
\label{app:additional-experiments}

\subsection{Architectural Design and Layer-Wise Modulation}
\label{app:architecture}

Table~\ref{tab:design-ablation} varies four design axes around the default \method configuration.
The Comp. rows vary only the adapter compression ratio $C$.
The Design rows fix $C=4$ and vary one design factor at a time: translation frequency, injection schedule, or integration style; check marks denote active mechanisms and blank cells denote inactive ones.

\paragraph{Adapter compression.}
The Wan2.2 backbone has width $3072$ with $24$ attention heads, and the ratio $C$ sets the parallel camera-attention branch to width $3072/C$ with $24/C$ heads while preserving the $128$-dimensional head size.
Thus, $C=2$, $4$, and $8$ use widths $1536$, $768$, and $384$, with $12$, $6$, and $3$ heads, respectively.
Although $C=2$ gives the strongest pose scores, the default $C=4$ retains similar CamMC with half the adapter capacity.

\paragraph{Translation frequencies.}
The Multi-$f$ column indicates whether metric translation uses multiple wavelength bands; the Single-$f$ variant keeps the same $128$-dimensional channel allocation and changes only the translation frequency.
It leaves directional AUC nearly unchanged but degrades tr\% and CamMC, showing that one wavelength does not provide reliable translation control across the evaluated trajectories.

\paragraph{Injection schedule.}
\begin{wrapfigure}{r}{0.48\textwidth}
    \vspace{-2mm}
    \centering
    \includegraphics[width=\linewidth]{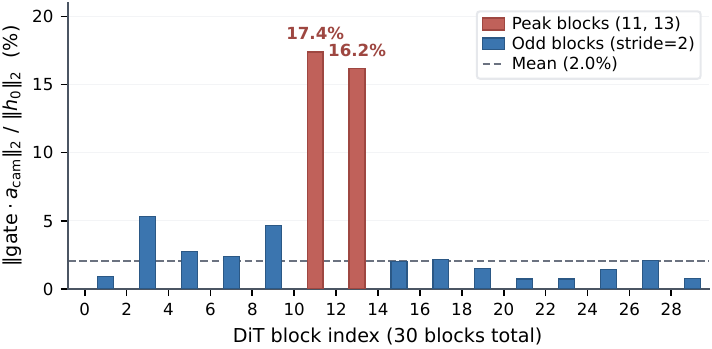}
    \vspace{-6mm}
    \caption{\textbf{Per-block camera modulation amplitude.}
    Relative camera-attention residual norm across $30$ DiT blocks; red marks the two largest blocks and blue marks alternating-block injection.}
    \label{fig:cam-residual}
    \vspace{-1mm}
\end{wrapfigure}

The Injection column compares three compute levels: Peak~11,13 uses two blocks, the default uses $15$ alternating blocks, and Every block uses all $30$.
Figure~\ref{fig:cam-residual} shows that camera modulation concentrates in the middle layers, with Blocks~$11$ and $13$ contributing over half of the residual amplitude, motivating the Peak~11,13 variant.
This variant preserves rotation accuracy and strong directional AUC while increasing CamMC only modestly from $2.32$ to $2.43$, and reduces camera-branch executions by $87\%$ relative to the default.
The default uses half as many camera-branch executions as Every block and has a similar CamMC ($2.32$ versus $2.27$).

\paragraph{Integration style.}
A check in CamSA denotes the dedicated parallel camera self-attention branch; a check in ReRoPE~\citep{li2026rerope} denotes carving low-frequency camera rotary channels into the backbone's native self-attention without that branch.
ReRoPE improves tr\% but weakens rotation and CamMC, so the parallel CamSA design provides the better overall balance.

\begin{table*}[t]
\centering
\caption{\textbf{Architecture and injection ablations on nuScenes.} Best in \textbf{bold}; second-best is underlined within each group.}
\label{tab:design-ablation}
\scriptsize
\setlength{\tabcolsep}{3.0pt}
\begin{tabular}{ll ccccc ccccc}
\toprule
& & \multicolumn{5}{c}{Configuration} & \multicolumn{5}{c}{Overall Metrics} \\
\cmidrule(lr){3-7} \cmidrule(lr){8-12}
& Variant & $C$ & Injection & CamSA & ReRoPE & Multi-$f$
& rot$^\circ$$\downarrow$ & tr\%$\,\downarrow$
& AUC@3$\uparrow$ & AUC@10$\uparrow$ & CamMC$\downarrow$ \\
\midrule
\multirow{3}{*}{Comp.}
& $C{=}2$ & $2$ & 15 blks & $\checkmark$ & & $\checkmark$
& $\mathbf{1.37}$ & $\mathbf{2.68}$ & $\mathbf{45.28}$ & $\mathbf{80.62}$ & $\mathbf{2.26}$ \\
& $C{=}4$ (ours) & $4$ & 15 blks & $\checkmark$ & & $\checkmark$
& $\mathbf{1.37}$ & $2.84$ & \underline{$42.11$} & \underline{$79.47$} & \underline{$2.32$} \\
& $C{=}8$ & $8$ & 15 blks & $\checkmark$ & & $\checkmark$
& \underline{$1.47$} & \underline{$2.83$} & $41.85$ & $79.11$ & $2.42$ \\
\midrule
\multirow{5}{*}{Design}
& Single $f$ & $4$ & 15 blks & $\checkmark$ & &
& $1.49$ & $3.58$ & $43.34$ & $79.61$ & $3.01$ \\
& Every block & $4$ & 30 blks & $\checkmark$ & & $\checkmark$
& \underline{$1.38$} & $\mathbf{2.65}$ & \underline{$43.60$} & \underline{$79.90$} & $\mathbf{2.27}$ \\
& Peak 11, 13 & $4$ & 2 blks & $\checkmark$ & & $\checkmark$
& $\mathbf{1.37}$ & $2.97$ & $\mathbf{45.00}$ & $\mathbf{80.27}$ & $2.43$ \\
& ReRoPE~\citep{li2026rerope} & $4$ & 15 blks & & $\checkmark$ & $\checkmark$
& $1.67$ & \underline{$2.67$} & $42.98$ & $79.71$ & $2.53$ \\
& Default (ours) & $4$ & 15 blks & $\checkmark$ & & $\checkmark$
& $\mathbf{1.37}$ & $2.84$ & $42.11$ & $79.47$ & \underline{$2.32$} \\
\bottomrule
\end{tabular}
\end{table*}

\subsection{Cost of Relaxing Per-Token Factorization}
\label{app:factorization-cost}

Strictly per-token encodings transform each token once, whereas \method expresses metric translation and disparity anchors in each query-camera frame and therefore constructs query-dependent key/value representations.
Table~\ref{tab:factorization-cost} measures the resulting cost at both the operator and full-model levels.

\paragraph{Evaluation protocol.}
The \textit{Single CamSA Block} columns time one complete camera self-attention (CamSA) block, including positional modulation, attention, and output projection, while fixing the block dimensions and input tokens across methods.
This isolated benchmark uses one H20 because it excludes the backbone and distributed model sharding.
The \textit{Training} and \textit{Latent Denoising} columns instead measure full training steps and the complete denoising loop on four H20 GPUs under the production four-way HSDP setup.

\paragraph{Matched comparison.}
For the full-model benchmark, GTA, UCPE, and \method inject CamSA into the same $15$ alternating transformer blocks (Blocks~$1,3,\ldots,29$), yielding $5.14$B parameters for each camera-conditioned model; the backbone-only baseline contains $5.00$B parameters.
This controls both the injection schedule and model size across the three camera encodings.

\begin{table*}[t]
\centering
\caption{\textbf{Computational cost of query-camera grouping.} Single-block results use one H20; full-model results use four H20 GPUs with a matched $15$-block injection schedule.}
\label{tab:factorization-cost}
\scriptsize
\setlength{\tabcolsep}{3.0pt}
\begin{tabular}{llcccccc}
\toprule
& & \multicolumn{2}{c}{Single CamSA Block} & \multicolumn{2}{c}{Training} & \multicolumn{2}{c}{Latent Denoising} \\
\cmidrule(lr){3-4} \cmidrule(lr){5-6} \cmidrule(lr){7-8}
Method & Factorization
& \shortstack{Latency\\(ms)$\downarrow$}
& \shortstack{Peak Mem.\\(GiB)$\downarrow$}
& \shortstack{Throughput\\(clips/s)$\uparrow$}
& \shortstack{Peak Mem.\\(GiB/GPU)$\downarrow$}
& \shortstack{Throughput\\(clips/s)$\uparrow$}
& \shortstack{Peak Mem.\\(GiB/GPU)$\downarrow$} \\
\midrule
Wan2.2 base & No CamSA & \multicolumn{2}{c}{N/A} & $5.30$ & $35.80$ & $0.197$ & $29.65$ \\
GTA~\citep{miyato2024gta} & Per-token & $4.97$ & $0.38$ & $4.89$ & $36.58$ & $0.188$ & $31.03$ \\
UCPE~\citep{zhang2026ucpe} & Per-token & $5.03$ & $0.40$ & $4.86$ & $36.58$ & $0.188$ & $31.03$ \\
\method & Query-grouped & $12.47$ & $1.21$ & $4.70$ & $37.57$ & $0.177$ & $31.13$ \\
\bottomrule
\end{tabular}
\end{table*}

\paragraph{Results.}
Query-camera grouping is more expensive within a single CamSA block, but the full backbone amortizes much of this local overhead.
Relative to UCPE, \method requires about $2.5\times$ the isolated-block latency and $3\times$ the peak memory, whereas full-model throughput decreases by $3.3\%$ for training and $5.9\%$ for denoising, with peak-memory increases of $2.7\%$ and $0.3\%$, respectively.
GTA and UCPE have nearly identical system costs under the matched schedule.

\paragraph{Source of overhead and future work.}
The extra local cost comes from constructing and storing separate transformed K/V tensors for each query-camera group rather than from additional attention dot products.
Processing query-camera groups in smaller batches can reduce peak memory, while a fused grouped-attention kernel could avoid expanded K/V tensors and reduce both memory and latency.

\subsection{Real-World Grounding with Retrieved History}
\label{app:history}

To test grounding beyond the current field of view, we encode calibrated same-location images with the video VAE and inject their patch tokens through two pathways.
History CA follows CityRAG~\citep{chou2026cityrag}: video tokens attend to retrieved-reference K/V through standard cross-attention.
History SA follows Seoul World Model~\citep{seo2026seoulworld}: retrieved reference tokens are concatenated with the current video tokens and jointly processed by self-attention across successive DiT blocks; only the generated video tokens are decoded.

\paragraph{Training signals.}
Current-image dropout randomly removes the entire clean conditioning prefix when usable history is available, so the model must predict from retrieved observations rather than copy the current clip; evaluation retains the full prefix.
Standard training retrieves history close to the ground-truth camera path, but left/right test commands often have only history with a larger relative pose, creating a train--test mismatch.
Lateral-conflict training simulates this case by substituting a spatially overlapping donor from a neighboring lane while leaving the target video and camera command unchanged, teaching the model to use off-trajectory history without weakening camera control.

\paragraph{Temporal tagging.}
Both pathways add $100$ to the matched history-frame time coordinates, placing retrieved tokens in a dedicated temporal region so the model can distinguish historical context from current video tokens.

\begin{table*}[t]
\centering
\caption{\textbf{Real-world grounding with retrieved history on nuScenes.} Best in \textbf{bold}; second-best is underlined.}
\label{tab:history}
\setlength{\tabcolsep}{3.5pt}
\small
\begin{tabular}{lccccccc}
\toprule
Method & rot$^\circ$$\downarrow$ & tr\%$\,\downarrow$
& AUC@3$\uparrow$ & AUC@10$\uparrow$ & CamMC$\downarrow$ & FID$\downarrow$ & FVD$\downarrow$ \\
\midrule
\method (no history) & $0.82$ & $2.21$
& $48.76$ & $83.24$ & $1.60$ & $19.82$ & $134.15$ \\
\method + History CA
& \underline{$0.78$} & \underline{$2.13$}
& $\mathbf{52.01}$ & \underline{$84.17$} & \underline{$1.52$} & \underline{$19.59$} & \underline{$130.27$} \\
\method + History SA
& $\mathbf{0.74}$ & $\mathbf{2.11}$
& \underline{$51.65$} & $\mathbf{84.22}$ & $\mathbf{1.48}$ & $\mathbf{18.21}$ & $\mathbf{104.41}$ \\
\bottomrule
\end{tabular}
\end{table*}

Both history variants provide modest pose-control gains, while History SA achieves the best CamMC, FID, and FVD in Table~\ref{tab:history}.
The VAE latents are optimized for pixel reconstruction rather than semantic alignment, so they provide no explicit high-level interface for retrieval fusion.
History SA repeatedly mixes retrieved and future tokens across successive DiT blocks, whereas History CA exposes the retrieved latents only as cross-attention K/V; this difference may explain History SA's stronger visual anchoring.
Figure~\ref{fig:history-anchor} illustrates this pattern: History CA follows the no-history layout at late timesteps, while History SA preserves retrieval-supported geometry in both selected scenes.

\begin{figure*}[!ht]
  \centering
  \newcommand{\historyheader}{%
    \noindent\hspace*{0.045\textwidth}%
    \parbox{0.95\textwidth}{\centering\footnotesize
      \makebox[0.2\linewidth][c]{Ground truth}%
      \makebox[0.2\linewidth][c]{Retrieved history}%
      \makebox[0.2\linewidth][c]{No history}%
      \makebox[0.2\linewidth][c]{\textcolor[RGB]{238,119,51}{History CA}}%
      \makebox[0.2\linewidth][c]{\textcolor[RGB]{0,119,187}{History SA}}%
    }\par\vspace{1pt}%
  }
  \newcommand{\historyrowlabel}[1]{%
    \noindent\makebox[0.045\textwidth][r]{\scriptsize $#1$\hspace{0.35em}}%
  }
  \textbf{(a) Tree-lined corridor}\par\vspace{2pt}
  \historyheader
  \historyrowlabel{t=0}\includegraphics[width=0.95\textwidth]{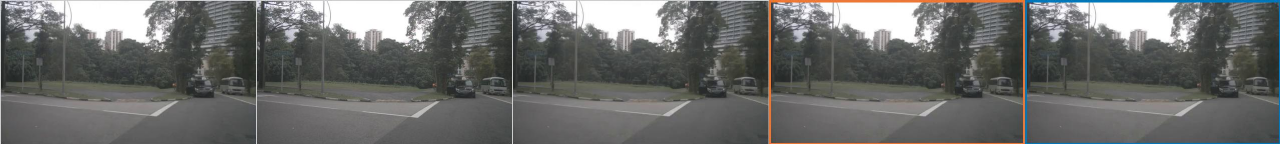}\par\vspace{1pt}
  \historyrowlabel{t=32}\includegraphics[width=0.95\textwidth]{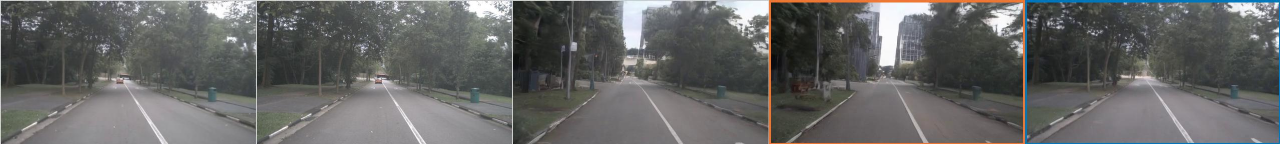}\par\vspace{1pt}
  \historyrowlabel{t=48}\includegraphics[width=0.95\textwidth]{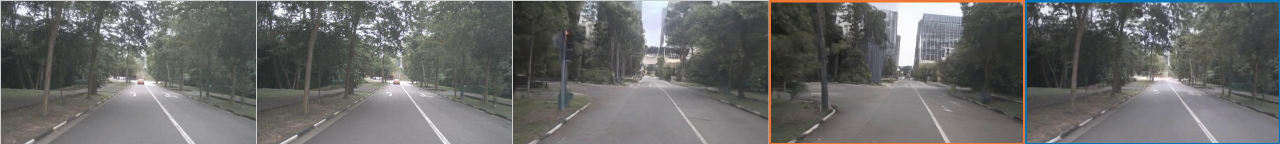}\par\vspace{1pt}
  \vspace{5pt}
  \textbf{(b) Cross-weather street corner}\par\vspace{2pt}
  \historyheader
  \historyrowlabel{t=0}\includegraphics[width=0.95\textwidth]{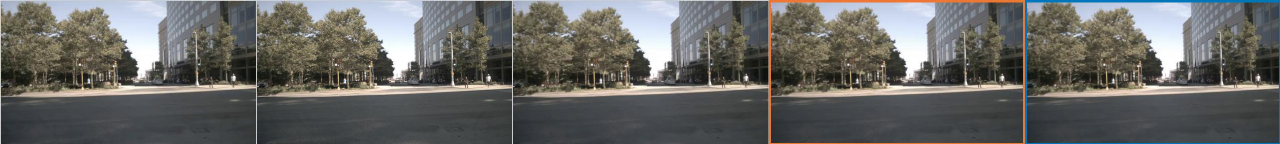}\par\vspace{1pt}
  \historyrowlabel{t=32}\includegraphics[width=0.95\textwidth]{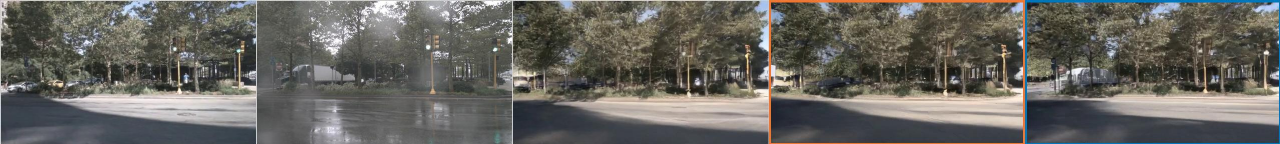}\par\vspace{1pt}
  \historyrowlabel{t=48}\includegraphics[width=0.95\textwidth]{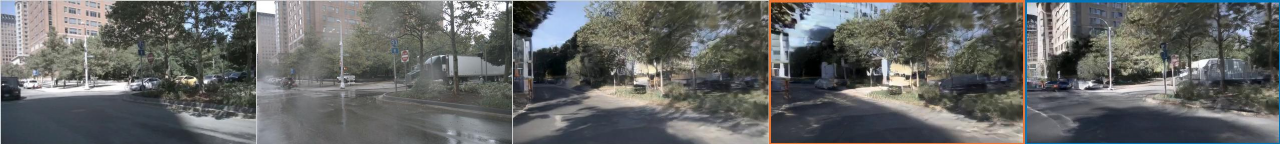}\par\vspace{1pt}
  \caption{\textbf{Qualitative comparison of history injection.} Across the same scenes and timesteps, History SA preserves retrieval-supported late-frame geometry, whereas History CA follows the no-history layout.}
  \label{fig:history-anchor}
\end{figure*}
\FloatBarrier

\section{Additional Qualitative Results}
\label{app:qualitative}

\subsection{Matched-Backbone Comparison with Prior Camera Encodings}
\label{app:qual-sota}

Figure~\ref{fig:qual-sota} presents three fixed clips separately for readability; Table~\ref{tab:main-video} reports the aggregate evaluation.
In these clips, \method preserves more coherent late-frame geometry.

\begin{figure*}[p]
  \centering
  \includegraphics[width=\textwidth]{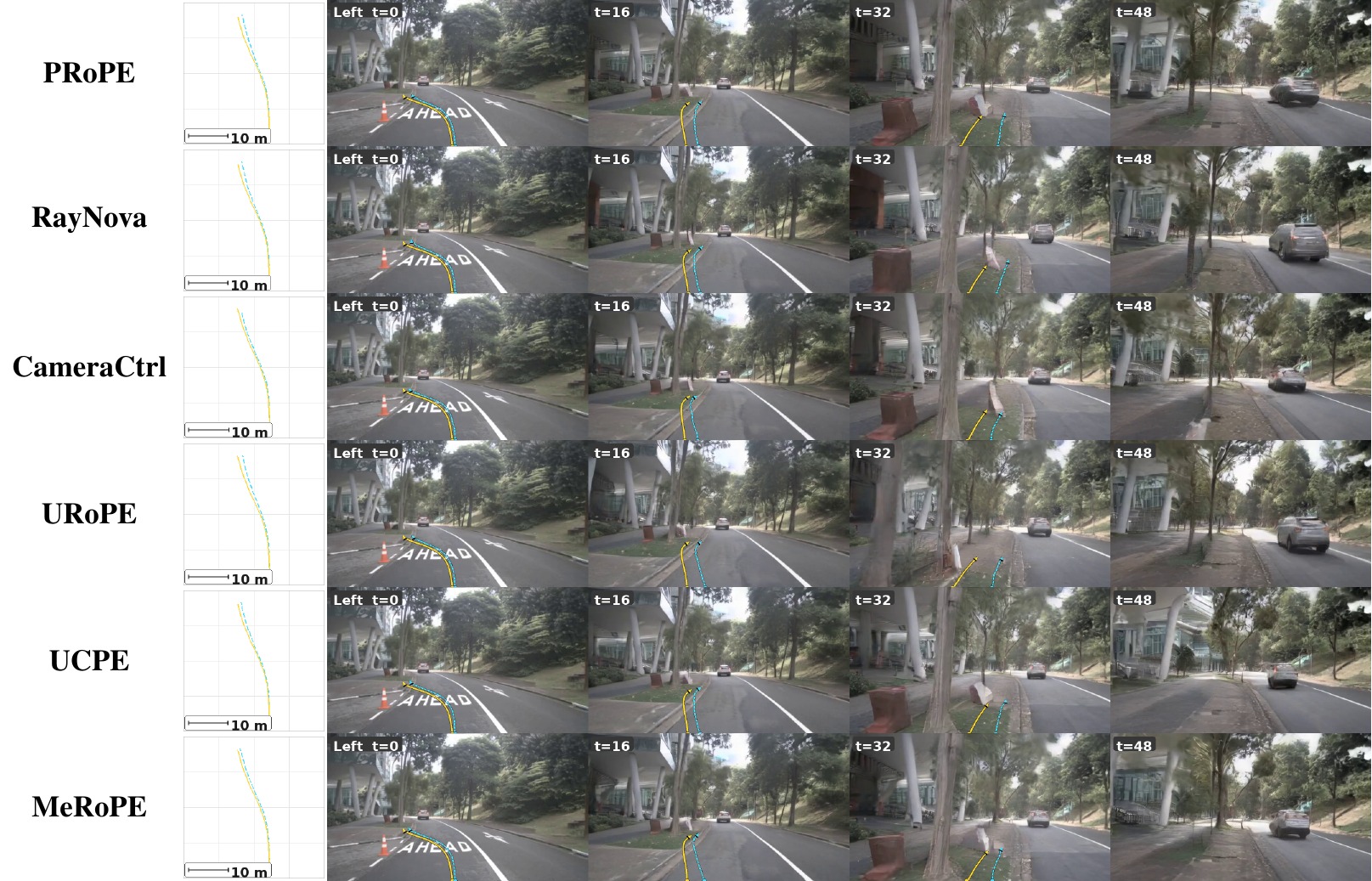}
  \caption{\textbf{Matched-backbone qualitative comparison (1/3).} Each row shows commanded and VGGT-$\Omega$-recovered paths with frames at $t\in\{0,16,32,48\}$.}
  \label{fig:qual-sota}
\end{figure*}
\begin{figure*}[p]
  \ContinuedFloat
  \centering
  \includegraphics[width=\textwidth]{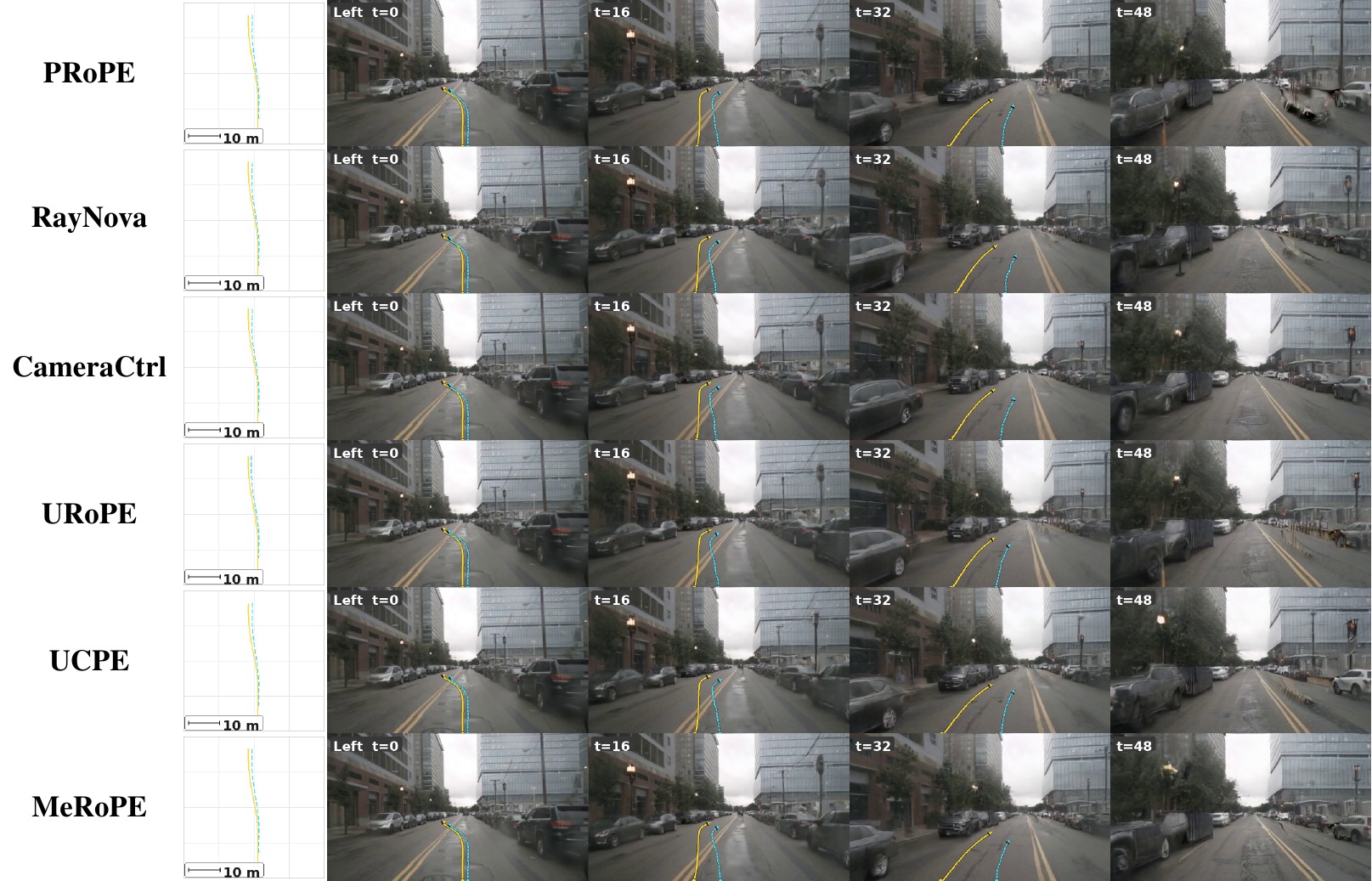}
  \caption{\textbf{Matched-backbone qualitative comparison (2/3).} The same protocol is shown for the second scene.}
  \label{fig:qual-sota-2}
\end{figure*}
\begin{figure*}[t]
  \ContinuedFloat
  \centering
  \includegraphics[width=\textwidth]{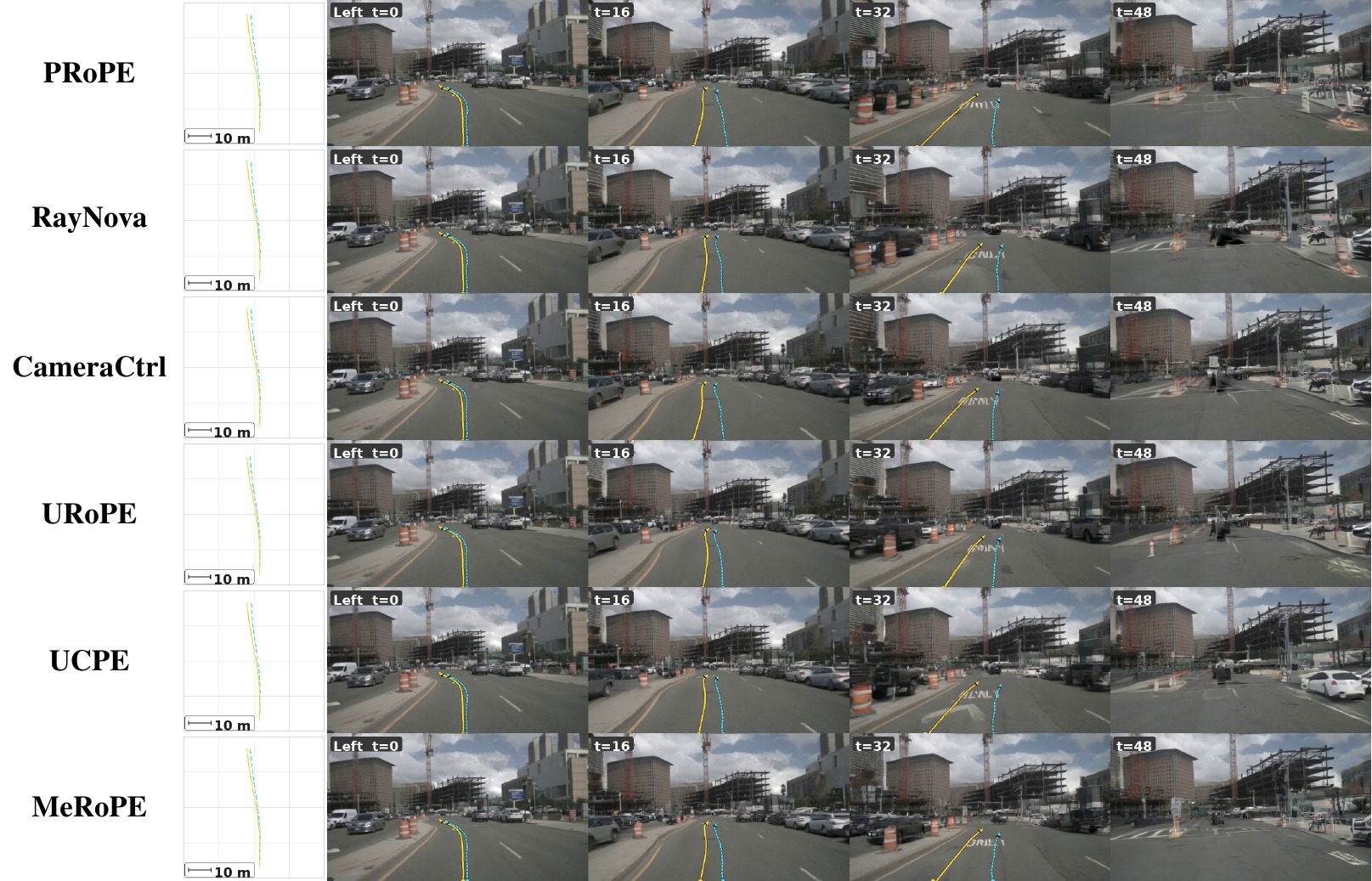}
  \caption{\textbf{Matched-backbone qualitative comparison (3/3).} The same protocol is shown for the third scene.}
  \label{fig:qual-sota-3}
\end{figure*}
\FloatBarrier

\subsection{Additional Camera-Pose-Controlled Generations}
\label{app:qual-pose-more}

Figure~\ref{fig:qual-pose-more} extends the main qualitative comparison to three additional nuScenes scenes.

\begin{figure*}[!ht]
  \centering
  \includegraphics[width=\textwidth]{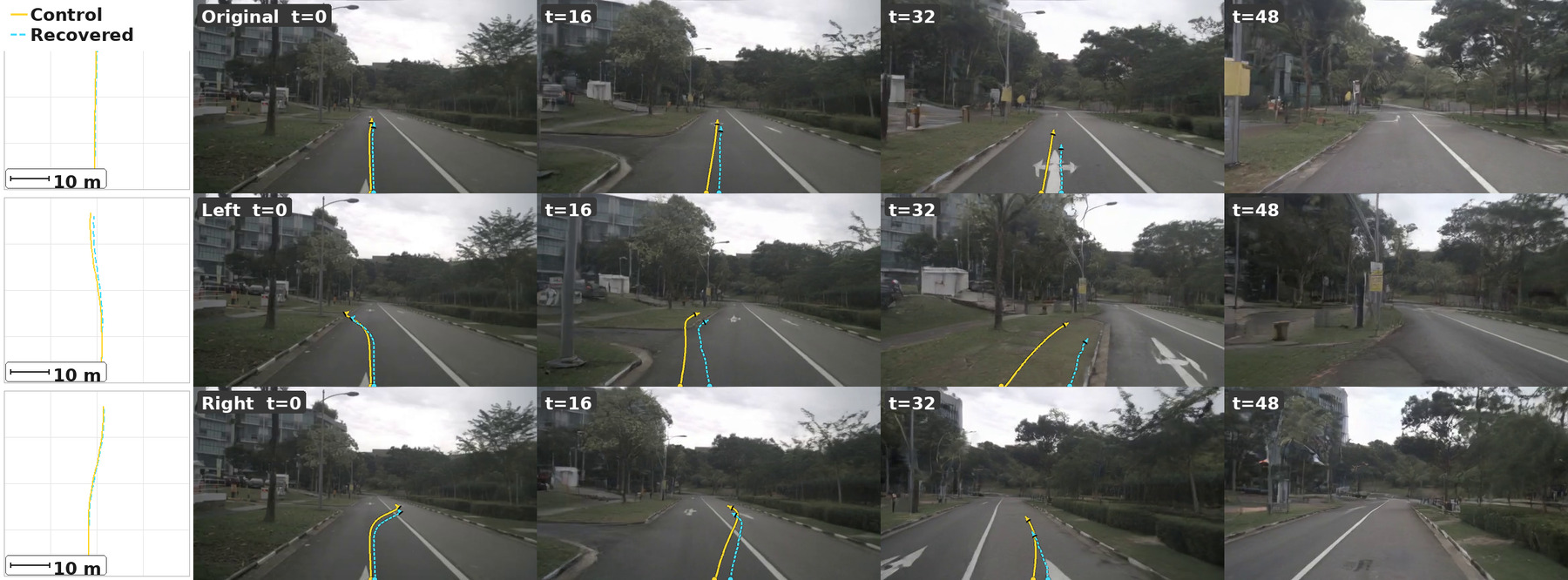}\\[2pt]
  \includegraphics[width=\textwidth]{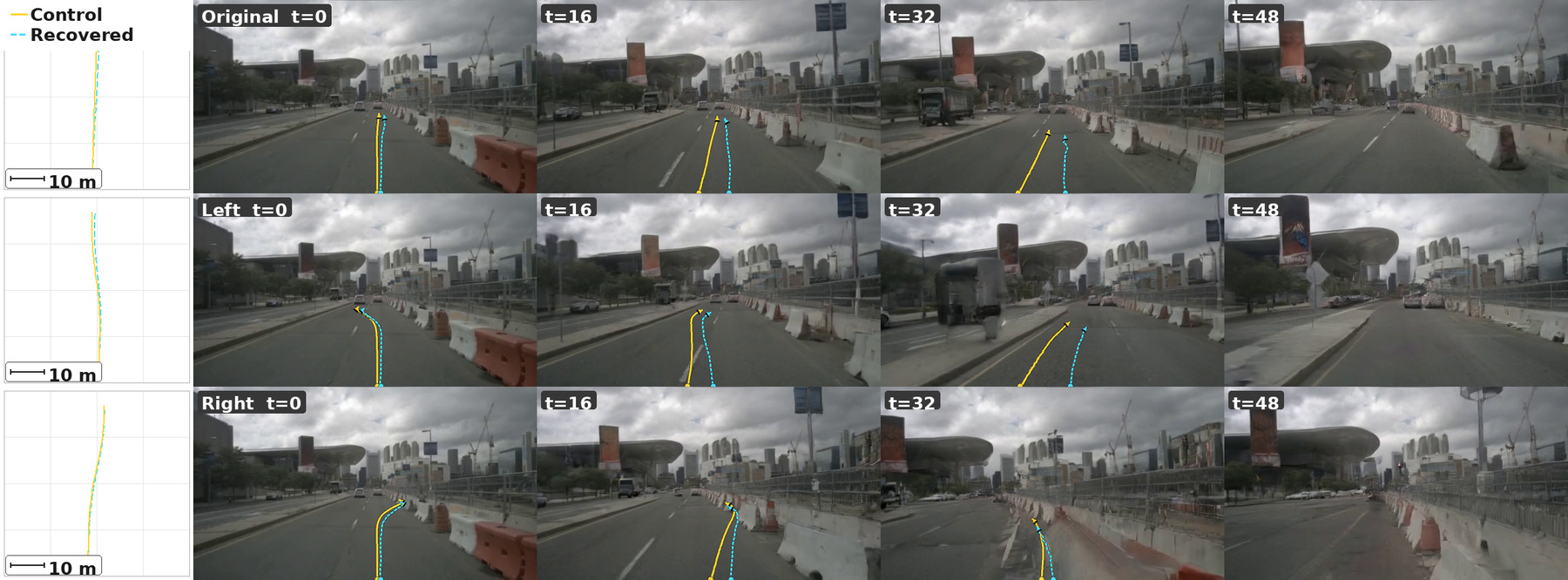}\\[2pt]
  \includegraphics[width=\textwidth]{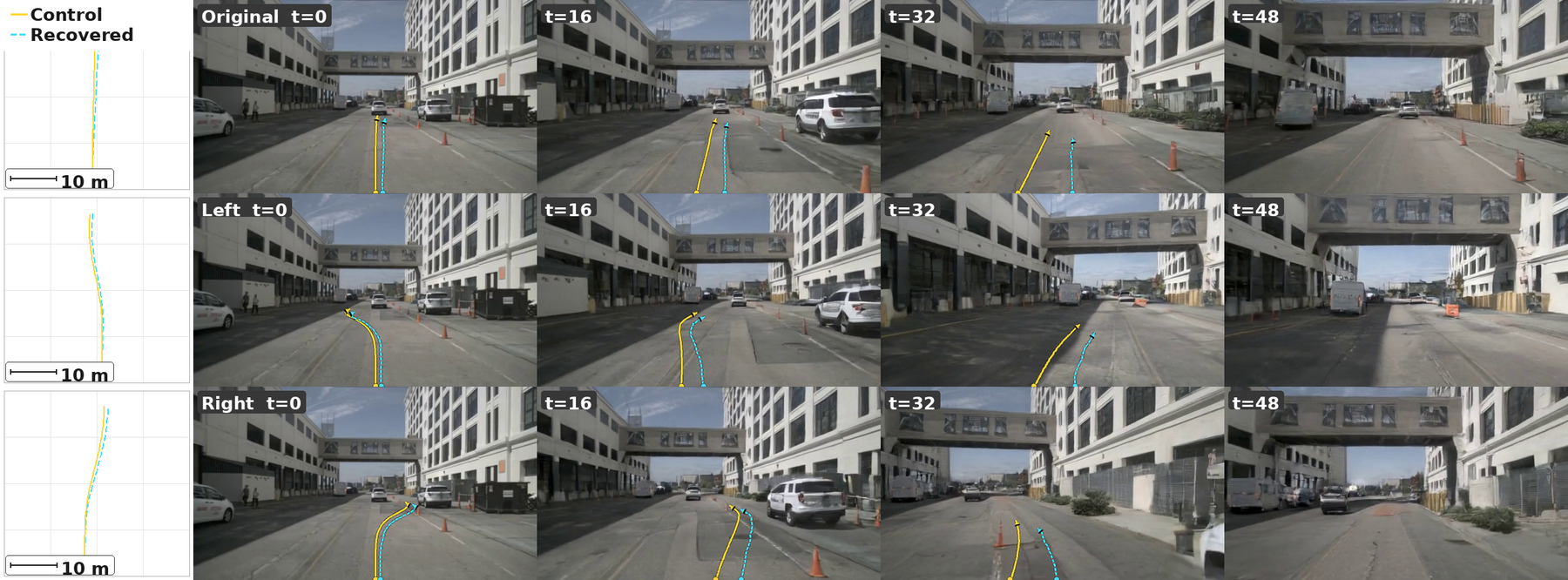}
  \caption{\textbf{Additional nuScenes camera-control results.} For each scene, the left plot compares commanded (yellow solid) and VGGT-$\Omega$-recovered (cyan dashed) paths; the remaining columns show frames at $t\in\{0,16,32,48\}$.}
  \label{fig:qual-pose-more}
\end{figure*}

\end{document}